\newcommand{\cO}{\mathcal{O}}
\newcommand{\EE}{\mathbb{E}}
\newcommand{\indic}{\mathbf{1}}
\newcommand{\yvec}{{\bf y}}
\newcommand{\Wapprox}{\textsc{WFA-Approx}}
\newcommand{\Wsampleapprox}[1]{\textsc{WFA-SampleApprox}$($#1$)$}
\newcommand{\Wsamplekatz}[1]{\textsc{WFA-SampleKatz}$($#1$)$}
\newcommand{\ignore}[1]{}
\newcommand{\bos}{$\wedge\,$}
\newcommand{\eos}{\text{$\$\,$}}
\newcommand{\cL}{\mathcal{L}}
\newcommand{\sP}{\mathscr{P}}
\newtheorem{lemma}{Lemma}
\newtheorem{corollary}{Corollary}
\def\Rset{\mathbb{R}}
\def\Rsetp{\Rset_+}
\def\argmin{\mathop{\rm argmin}\limits}
\def\argmax{\mathop{\rm argmax}\limits}
\newcommand{\src}[1]{p[{#1}]}
\newcommand{\dst}[1]{n[{#1}]}
\newcommand{\lab}[1]{\ell[#1]}
\newcommand{\weight}[1]{w[#1]}
\newcommand{\lastlab}[1]{\lab{#1_{|#1|}}}
\newcommand{\weighted}[1]{\hat{#1}}
\newcommand{\best}[1]{\tilde{#1}}
\title{Approximating probabilistic models as weighted finite automata}
\author{Ananda Theertha Suresh, Brian Roark, Michael Riley ,Vlad
  Schogol \\ \texttt{$\{$ theertha, roark, riley, vlads
    $\}$@google.com} \\ Google Inc. }
\begin{document}

\maketitle

\begin{abstract}
Weighted finite automata (WFA) are often used to represent
probabilistic models, such as $n$-gram language models, since, among other things, they are
efficient for recognition tasks in time and space.
The probabilistic source to be represented as a WFA, however, may come in many
forms. Given a generic probabilistic model over sequences, we propose an
algorithm to approximate it as a weighted finite automaton such that
the Kullback-Leibler divergence between the source model and the WFA target 
model is minimized. The proposed algorithm involves a counting step
and a difference of convex optimization step, both of which can be
performed efficiently. We demonstrate the usefulness of our approach
on various tasks, including distilling $n$-gram models from neural
models, building compact language models, and building open-vocabulary
character models.
The algorithms used for these experiments are available in an
open-source software library.
\end{abstract}

\section{Introduction}
\label{sec:intro}
Given a sequence of symbols $x_1,x_2,\ldots, x_{n-1}$, where symbols
are drawn from the alphabet $\Sigma$, a probabilistic model $S$ assigns
probability to the next symbol $x_n \in \Sigma$ by
\[
 p_s[x_{n} | x_{n-1} \ldots x_1].
\]
Such a model might be Markovian,
where
\[
p_s[x_n | x_{n-1} \ldots x_1] = p_s[x_n | x_{n-1} \ldots x_{n-k+1}],
\]
such as a $k$-gram language model (LM) \cite{Chen98} or it might be
non-Markovian such as a long short-term memory (LSTM) neural network
LM \cite{sundermeyer2012lstm}. Our goal is to approximate
a probabilistic model as a {\em weighted finite automaton} (WFA)
such that the weight assigned by the WFA is close to the probability
assigned by the source model. Specifically, we will seek to minimize
the Kullback-Leibler (KL) divergence between the source $S$ and the
target WFA model.

Representing the target model as a WFA has many advantages including
efficient use, compact representation, interpretability, and
composability. WFA models have been used in many applications including
speech recognition
\cite{MohriPereiraRiley2008}, speech synthesis \cite{rws2015}, optical
character recognition \cite{breuel2008}, machine translation
\cite{iglesias2011}, computational biology \cite{Durbin1998}, and
image processing \cite{AlbertKari2009}. One particular problem of
interest is language modeling for on-device (virtual) keyboard
decoding \cite{ouyang2017mobile}, where WFA models are widely used due
to space and time constraints. One example use of methods we
present in this paper comes from this domain, involving approximation of
a neural LM. Storing training data from the
actual domain of use (individual keyboard activity) in a
centralized server and training $k$-gram or WFA models directly may
not be feasible due to privacy constraints \cite{hard2018federated}.
To circumvent this, an LSTM model can be trained by federated
learning \cite{konevcny2016federated, mcmahan2017communication, hard2018federated},
converted to a WFA at the server, and then used
for fast on-device inference. This not only may improve performance
over training the models just on out-of-domain publicly available data, but also
benefits from the additional privacy provided by federated learning.
\citet{chen2019federated} use the methods presented in this
paper for this very purpose.

There are multiple reasons why one may choose to approximate a source
model with a WFA. One may have strong constraints on system latency,
such as the virtual keyboard example above. Alternatively, a
specialized application may require a distribution over just a subset
of possible strings, but must estimate this distribution from a more
general model -- see the example below regarding utterances for
setting an alarm.  To address the broadest range of use cases, we aim
to provide methods that permit large classes of source models and
target WFA topologies.  We explore several distinct scenarios
experimentally in Section \ref{sec:expt}.

Our methods allow {\em failure transitions} \cite{aho1975, mohri1997} in the
target WFA, which are taken only when no immediate match is possible at
a given state, for compactness. For example, in the WFA representation
of a backoff $k$-gram model, failure transitions can compactly
implement the backoff
\cite{katz,Chen98,Allauzen03,novak2013,hellsten17}. The inclusion of failure
transitions complicates our analysis and algorithms but is highly
desirable in applications such as keyboard decoding. Further, to
avoid redundancy that leads to inefficiency, we assume the
target model is {\em deterministic}, which requires that at each state
there is at most one transition labeled with a given symbol.

The approximation problem can be divided into two steps: (1) select an
unweighted automaton $A$ that will serve as the {\em topology} of the
target automaton and (2) weight the automaton $A$ to form our weighted
approximation $\weighted{A}$. The main goal of this paper is the latter
determination of the automaton's weighting in the approximation.  If
the topology is not known, we suggest a few techniques for inferring
topology later in the introduction.

We will now give some simple topology examples to illustrate the
approximation idea. In Section \ref{sec:expt} we will give
larger-scale examples.  Consider the unweighted automaton $A$ in
Figure~\ref{fig:alarm} that was designed for what one might say
to set an alarm.  To use this in an application such as speech
recognition, we would want to weight the automaton with some
reasonable probabilities for the alternatives.  For example, people
may be more likely to set their alarms for six or seven than four.
In the absence of data
specifically for this scenario, we can fall back on some available
background LM $M$, trained on a large suitable corpus. In
particular, we can use the conditional distribution

\begin{equation}
p_M[x_1 \ldots x_n | x_1 \ldots x_n \in {\cal L}(A)] =
\frac{p_M[x_1 \ldots x_n] 1_{x_1 \ldots x_n \in {\cal L}(A)}}
{\sum_{x_1 \ldots x_n \in {\cal L}(A)} p_M[x_1 \ldots x_n]},
\label{eq:restrict-lm}
\end{equation}
where $1$ is the indicator function and ${\cal L}(A)$
is the regular language accepted by the automaton $A$, as our source 
distribution $S$. We then use the unweighted automaton $A$ as our 
target topology. 

If $M$ is represented as a WFA, our approximation will
in general give a different solution than forming
the finite-state intersection with $A$ and
{\em weight-pushing} to normalize the result
\cite{MohriPereiraRiley2008,Mohri2009}.
Our approximation has the same states as $A$ 
whereas weight-pushed $M \cap A$ has $O(|M||A|)$ states.
Furthermore, weight-pushed $M \cap A$ is an exact
WFA representation of the distribution in Equation~\ref{eq:restrict-lm}.
\begin{figure}[t]
\begin{center}
\includegraphics[scale=.55]{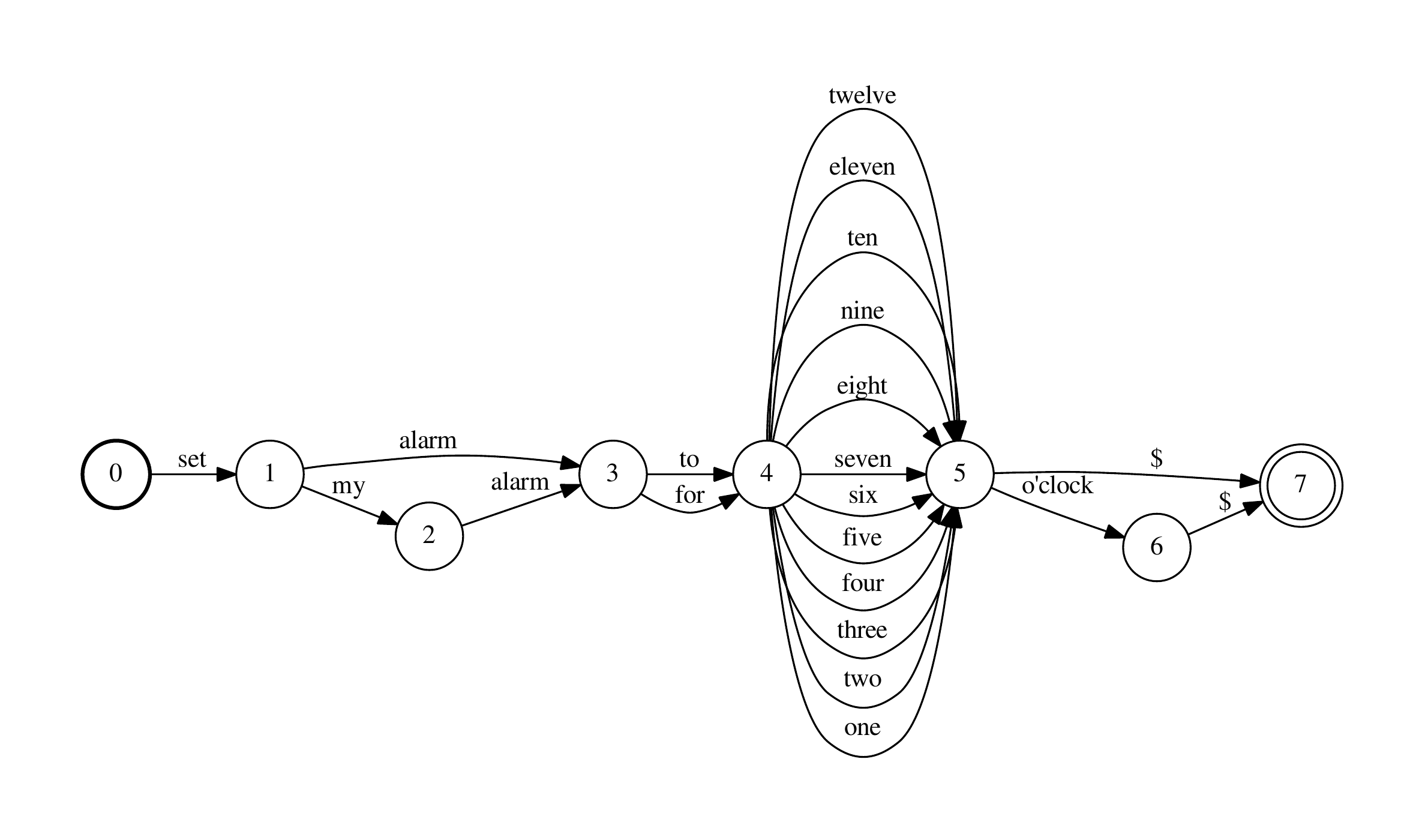}
\end{center}
\caption{An unweighted automaton that specifies what one might say 
to set an alarm.
The initial state is the bold circle and the final state is the double circle.
By convention, we terminate all accepted strings with the symbol \eos.}
\label{fig:alarm}
\end{figure}

As stated earlier, some applications may simply require smaller models
or those with lower latency of inference, and in such scenarios the
specific target topology may be unknown. In such cases, one
choice is to build a $k$-gram deterministic finite automaton (DFA)
topology from a corpus drawn from $S$ \cite{Allauzen03}. This could
be from an existing corpus or from random samples drawn from
$S$. Figure~\ref{fig:top}a shows a trigram topology for the very
simple corpus $aab$. Figure~\ref{fig:top}b shows an alternative
topology that allows {\em skip}-grams. Both of these representations
make use of failure transitions. These allow modeling strings unseen
in the corpus (e.g., $abab$) in a compact way by failing or {\em backing-off} to
states that correspond to lower-order histories. Such models can be
made more elaborate if some transitions represent classes, such as names
or numbers, that are themselves represented by sub-automata. As mentioned
previously, we will mostly assume we have a topology either pre-specified or
inferred by some means and focus on how to weight that topology to
best approximate the source distribution.
We hence focus our evaluation on intrinsic measures of model
quality such as perplexity or bits-per-character.\footnote{Model size
or efficiency of inference may be a common motivation for the model
approximations we present, which may suggest a demonstration of the
speed/accuracy tradeoff for some downstream use of the models.
However, as stated earlier in this introduction, there are mulitple reasons why 
an approximation may be needed, and our goal in this paper is to
establish that our methods provide a well-motivated approach should an
approximation be required for any reason.}

\begin{figure}[t]
\begin{center}
\begin{tabular}{c}
\vspace{-0.5cm}
\includegraphics[scale=.55]{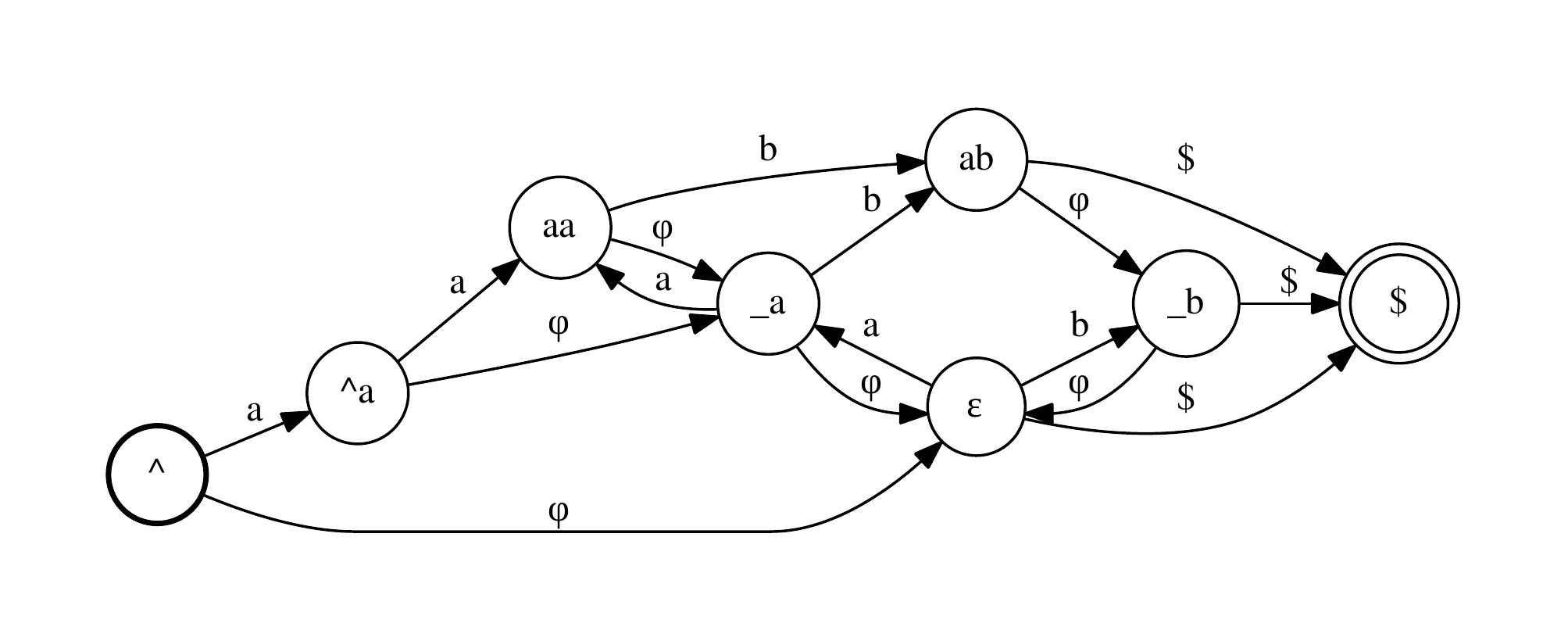}\\
\vspace{-0.75cm}
(a)\\
\vspace{-0.5cm}
\includegraphics[scale=.55]{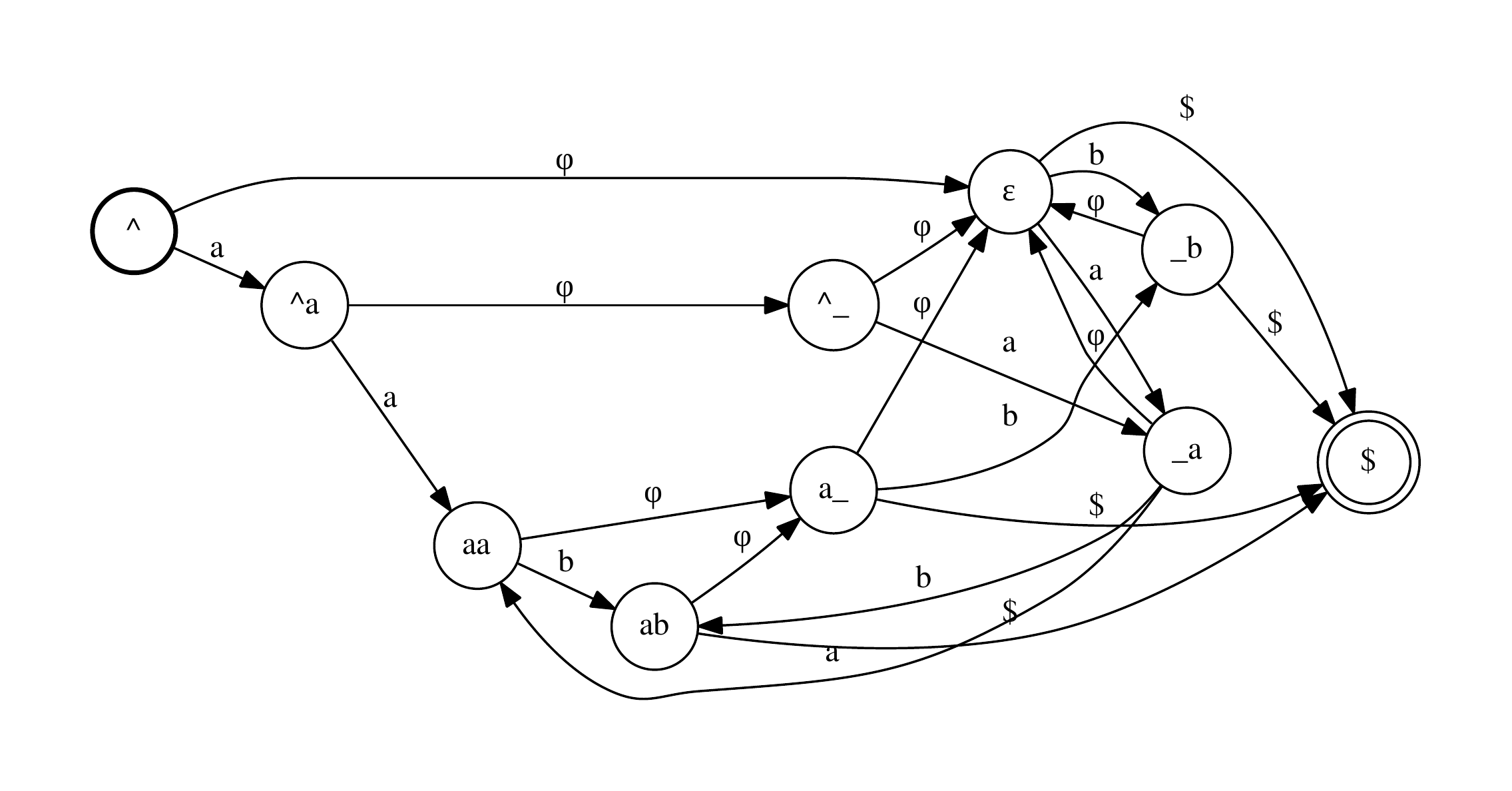}\\
\vspace{-0.5cm}
(b)
\end{tabular}
\end{center}
\caption{Topology examples derived from the corpus $aab$. States
are labeled with the context that is remembered,
\bos denotes the
initial context, $\epsilon$ the empty context,
\eos the final context (and terminates
accepted strings), and $\_$ matches any symbol in a context.
(a) $3$-gram topology:
  failure transitions, labeled with $\varphi$,
  implement backoff from histories $xy$ to $\_y$ to $\epsilon$.
  (b) {\em skip}-gram topology: failure transitions implement backoff
  instead from histories $xy$ to $x\_$.}
\label{fig:top}
\end{figure}

This paper expands upon an earlier, shorter version \cite{suresh2019fsmnlp} by
also providing, beyond the additional motivating examples that have
already been presented in this introduction: an extended related work
section and references; expansion of the theoretical analysis from
three lemmas without proofs (omitted for space) to five lemmas (and a
corollary) with full proofs; inclusion of additional algorithms for
counting (for general WFA source and target in addition to
$\varphi$-WFA); inclusion of additional experiments, including some
illustrating the exact KL-minimization methods available for WFA
sources of different classes; and documentation of the open-source
library that provides the full functionality presented here.

The paper is organized as follows. In Section~\ref{sec:prevwork} we
review previous work in this area. In Section~\ref{sec:theory} we give
the theoretical formulation of the problem and the minimum KL
divergence approximation. In Section~\ref{sec:algo} we present
algorithms to compute that solution. One algorithm is for the case
that the source itself is finite-state. A second algorithm is for the
case when it is not and involves a sampling approach. In
Section~\ref{sec:expt} we show experiments using the
approximation. In Section~\ref{sec:lib} we briefly describe the open-source
software used in our experiments. Finally, in Section \ref{sec:discuss} we 
discuss the results and offer conclusions.

\section{Related work}
\label{sec:prevwork}
In this section we will review methods both for inferring unweighted
finite-state models from data and estimating the weight distribution
as well in the weighted case. We start with the unweighted case.

There is a long history of unweighted finite-state model inference
\cite{parekh2000, cicchello2003}.
\citet{gold1967} showed that an arbitrary regular set $L$
cannot be learned, {\em identified in the limit}, strictly from the 
presentation of a sequence of positive examples that eventually includes
each string in $L$. This has led to several alternative lines of attack.

One approach is to include the negative examples in the sequence. Given
such a {\em complete sample}, there are polynomial-time algorithms that
identify a regular set in the limit \cite{gold1978}.
For example, a {\em prefix tree} of the positive examples can be built
and then states can be merged so long as they do not cause a negative example 
to be accepted \cite{oncina1992, dupont1996}. Another approach is to train 
a recurrent neural network (RNN)
on the positive and negative examples and then extract a finite automaton 
by quantizing the continuous state space of the RNN \cite{giles1992,
jacobsson2005}.

A second approach is to assume a teacher is available 
that determines not only if a string is a positive or negative example
but also if the language of the current hypothesized automaton equals $L$
or, if not, provides a counterexample. In this case
the minimal $m$-state DFA corresponding to $L$ can be learned in
time polynomial in $m$ \cite{angluin1987}. \citet{weiss2018extracting}
apply this method for DFA extraction from an RNN.

A third approach is to assume a probability distribution over the
(positive only) samples. With some reasonable restrictions on the
distribution, such as that the probabilities are generated from a weighted
automaton $A$ with $L = {\cal L}(A)$, then $L$ is identifiable in the limit
with `high probability' \cite{angluin1988, pitt1989}. 

There have been a variety of approaches for estimating
weighted automata. A variant of the prefix tree construction 
can be used that merges states 
with sufficiently similar suffix distributions,
estimated from source frequencies
\cite{carrasco1994, carrasco1999}.
Approaches that produce (possibly highly) non-deterministic results
include the Expectation-Maximization (EM)
algorithm \cite{dempster1977} applied to a fully connected Hidden
Markov models or spectral methods applied to automata \cite{Balle2012,
Balle2014}. \citet{eisner2001} describes an algorithm for
estimating probabilities in a finite-state {\em transducer} from data
using EM-based methods.  \citet{weiss2019learning}
and \citet{okudono2019weighted} provide adaptations to
the \citet{weiss2018extracting} DFA extraction algorithm to yield
weighted automata.

For approximating neural network (NN) models as WFAs,
\citet{deoras2011variational} used an RNN LM to generate samples
that they then used to train a $k$-gram LM.
\citet{arisoy2014converting} used deep neural network (DNN) models
of different orders to successively build and prune a $k$-gram LM with
each new order constrained by the previous order.
\citet{adel2014comparing} also trained DNNs of different orders,
built a $k$-gram LM on the same data to obtain a topology and then
transferred the DNN probabilities of each order onto that $k$-gram
topology. \citet{tino1997} quantized the continuous
state space of an RNN and then estimated the transition probabilities
from the RNN.  \citet{lecorve2012conversion}
quantized the hidden states in an LSTM to form a finite-state model
and then used an entropy criterion to backoff to low-order $k$-grams
to limit the number of transitions. See
Section~\ref{sec:prev_work_discussion} for a more detailed comparison
with the most closely related methods, once the details of our
algorithms have been provided.

Our paper is distinguished in several respects from previous
work. First, our general approach does not depend on the form the
source distribution although we specialize our algorithms for (known)
finite-state sources with an efficient direct construction and for
other sources with an efficient sampling approach.  Second, our targets are a
wide class of deterministic automata with failure transitions.  These
are considerably more general than $k$-gram models but retain the
efficiency of determinism and the compactness failure transitions
allow, which is especially important in applications with large alphabets like
language modeling. Third, we show that our approximation searches for
the minimal KL divergence between the source and target distributions,
given a fixed target topology provided by the application or some
earlier computation.

\section{Theoretical analysis}
\label{sec:theory} 
\subsection{Probabilistic models}
\label{subsec:probmod}
Let $\Sigma$ be a finite alphabet.
Let $x^{n}_i \in \Sigma^*$ denote the string $x_i
x_{i+1}\ldots x_n$ and $x^n \triangleq x^n_1$.
A probabilistic model $p$ over $\Sigma$ is
a probabilistic distribution over the next symbol $x_n$, given the previous
symbols $x^{n-1}$, such that\footnote{We define $x^0 \triangleq \epsilon$, the empty string, and adopt $p(\epsilon) = 0$.}
\[
\sum_{x \in \Sigma} p(x_n = x | x^{n-1}) = 1 \text{ and } p(x_n = x
| x^{n-1}) \geq 0, \forall x \in \Sigma.
\]
Without loss of generality, we assume that the model maintains an
internal state $q$ and updates it after observing the next
symbol.\footnote{In the most general case, $q(x^n) =
  x^n$.} Furthermore, the probability of the subsequent state just
depends on the state $q$
\[
p(x^n_{i+1} | x^i) = p(x^n_{i+1}| q(x^i)),
\]
for all $i, n, x^i, x^n_{i+1}$, where $q(x^{i})$ is the state that the
model has reached after observing sequence $x^i$.  Let $Q(p)$ be the
set of possible states. Let the language
$\cL(p) \subseteq \Sigma^*$ defined by the distribution $p$ be
\begin{equation}
\cL(p) \triangleq \{x^n \in \Sigma^* : p(x^n) > 0 \text{ and }
x_n = \eos \text{ and }
x_i \neq \eos, \forall\ i < n\}.
\label{eq:language}
\end{equation}
The symbol $\eos$ is used as a stopping criterion. Further for all
$x^n \in \Sigma^*$ such that $x_{n-1}=\eos$, $p(x_n | x^{n-1}) = 0$.

The KL divergence between the source model $p_s$ and the target model
$p_a$ is given by
\begin{equation}
D(p_s || p_a) = \sum_{x^n \in \Sigma^*} p_s(x^n) \log \frac{p_s(x^n)}{p_a(x^n)},
\end{equation}
where for notational simplicity, we adopt the notion $0/0=1$ and $0
\log (0/0) = 0$ throughout the paper.
Note that for the KL divergence to
be finite, we need $\cL(p_s) \subseteq \cL(p_a)$.
We will assume throughout
the source entropy $H(p_s) = - \sum_{x^n} p_s(x^n) \log p_s(x^n)$
is finite.\footnote{If $|Q(p_s)|$ is finite, it
can be shown that $H(p_s)$ is necessarily finite.}
We first reduce the
KL divergence between two models as follows \cite[cf.][]{carrasco1997,cortes2008}.
In the following, let $q_*$ denote the states of the probability distribution $p_*$.
\begin{lemma}
  If $\cL(p_s) \subseteq \cL(p_a)$, then
  \begin{equation}
   \label{eq:kl1}
    D(p_s ||p_a) = \sum_{q_a \in Q_a} \sum_{q_s \in Q_s} \sum_{x\in
      \Sigma} \gamma(q_s,q_a)\, p_s(x|q_s) \log
    \frac{p_s(x|q_s)}{p_a(x|q_a)},
    \end{equation}
  where
  \begin{equation}
\gamma(q_s, q_a) = \sum^\infty_{i=0}
\sum_{x^i: q_s(x^i)=q_s, q_a(x^i)=q_a} p_s(x^i).
    \end{equation}
  \label{thm:kldiv}
  \end{lemma}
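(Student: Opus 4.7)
The plan is to start from the definitional sum of $D(p_s\|p_a)$ over complete strings in $\cL(p_s)$, apply the chain rule, replace conditionals by their state-based equivalents, swap the order of summation, and finally regroup prefixes by the pair of states $(q_s, q_a)$ they induce.

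First, I would use the chain rule to write, for every $x^n\in\cL(p_s)$,
\[
\log\frac{p_s(x^n)}{p_a(x^n)}=\sum_{i=1}^{n}\log\frac{p_s(x_i\mid x^{i-1})}{p_a(x_i\mid x^{i-1})},
\]
which is well-defined because $\cL(p_s)\subseteq\cL(p_a)$ precludes $\log 0$. Substituting the state-abstraction property $p_*(x_i\mid x^{i-1})=p_*(x_i\mid q_*(x^{i-1}))$ for both models, the $i$-th summand depends on $x^n$ only through the prefix $x^i$ (via the state reached after $x^{i-1}$ and the symbol $x_i$).

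Second, I would swap the order of summation, turning $\sum_{x^n\in\cL(p_s)}\sum_{i=1}^{n}$ into $\sum_{i=1}^{\infty}\sum_{x^i}$; since the summand no longer depends on symbols beyond position $i$, marginalizing $p_s(x^n)$ over all completions of $x^i$ to a string in $\cL(p_s)$ yields the prefix probability $p_s(x^i)$. This produces
\[
D(p_s\|p_a)=\sum_{i=1}^{\infty}\sum_{x^i}p_s(x^i)\log\frac{p_s(x_i\mid q_s(x^{i-1}))}{p_a(x_i\mid q_a(x^{i-1}))}.
\]
Then I would factor $p_s(x^i)=p_s(x^{i-1})\,p_s(x_i\mid q_s(x^{i-1}))$ and split the inner sum over $x^i$ into sums over the prefix $x^{i-1}$ and the next symbol $x_i\in\Sigma$.

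Third, I would regroup: for each pair $(q_s,q_a)\in Q_s\times Q_a$ the terms $p_s(x_i\mid q_s)\log\frac{p_s(x_i\mid q_s)}{p_a(x_i\mid q_a)}$ factor out of the sum over prefixes, leaving the coefficient $\sum_{i=1}^{\infty}\sum_{x^{i-1}:\,q_s(x^{i-1})=q_s,\,q_a(x^{i-1})=q_a}p_s(x^{i-1})$. After reindexing $j=i-1$ starting at $j=0$, this is precisely $\gamma(q_s,q_a)$, matching the claim.

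The main obstacle I anticipate is justifying the interchange of the two infinite sums in step two, since the terms $\log\frac{p_s(x_i\mid q_s)}{p_a(x_i\mid q_a)}$ can be of either sign. The cleanest route is to split the log into $\log p_s - \log p_a$ and handle each piece: the $\log p_s$ part produces the (finite) entropy $H(p_s)$ after rearrangement, while the $-\log p_a$ part consists of nonnegative terms (up to a finite correction on $\{p_a\ge 1\}$), so Tonelli applies. A small bookkeeping point is the role of \eos: because strings in $\cL(p_s)$ terminate exactly once with \eos\ and $p_s(x_n\mid x^{n-1})=0$ for any $x_n$ following \eos, the chain rule has exactly $n$ factors and the prefix marginalization does not double-count past-\eos\ extensions.
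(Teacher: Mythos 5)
Your proposal is correct and follows essentially the same route as the paper's proof: chain rule on the log-ratio, interchange of the sums over $n$ and $i$ with marginalization over completions to get prefix probabilities, substitution of the state-abstraction property, and regrouping of prefixes by the state pair $(q_s,q_a)$ to produce $\gamma$. Your explicit justification of the interchange of infinite sums (splitting the logarithm and invoking the finiteness of $H(p_s)$ together with Tonelli for the nonnegative part) is a point the paper passes over silently, but it does not change the structure of the argument.
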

\begin{proof}
  \begin{align*}
    D(p_s || p_a) &= \sum^{\infty}_{n=1}\sum_{x^n} p_s(x^n) \log
    \frac{p_s(x^n)}{p_a(x^n)} \\
    &= \sum^\infty_{n=1}\sum_{x^n}
    p_s(x^n) \sum^n_{i=1} \log
    \frac{p_s(x_i|x^{i-1})}{p_a(x_i|x^{i-1})}
    \\ &= \sum^\infty_{n=1}
    \sum^{n}_{i=1}\sum_{x^n} p_s(x^n) \log
    \frac{p_s(x_i|x^{i-1})}{p_a(x_i|x^{i-1})}
\\ &= \sum^\infty_{n=1}
    \sum^{n}_{i=1}\sum_{x^n} p_s(x^{i-1})p_s(x_{i}|x^{i-1})
    p_s(x^{n}_{i+1}|x^{i}) \log
    \frac{p_s(x_i|x^{i-1})}{p_a(x_i|x^{i-1})} \\ &= \sum^\infty_{i=1}
    \sum_{x^{i-1}} p_s(x^{i-1})\sum_{x_i}p_s(x_{i}|x^{i-1}) \log
    \frac{p_s(x_i|x^{i-1})}{p_a(x_i|x^{i-1})} \cdot \sum_{n\geq i}
    \sum_{x^n_{i+1}} p_s(x^{n}_{i+1}|x^{i}) \\ &= \sum^\infty_{i=1}
    \sum_{x^{i-1}} p_s(x^{i-1})\sum_{x_i}p_s(x_{i}|x^{i-1}) \log
    \frac{p_s(x_i|x^{i-1})}{p_a(x_i|x^{i-1})}.
    \end{align*}
  By definition, the probability of the next symbol conditioned on the past
  just depends on the state. Hence grouping terms corresponding to
  same states both in $s$ and $t$ yields,
  \begin{align*}
& \sum^\infty_{i=1} \sum_{x^{i-1}} p_s(x^{i-1})
    \sum_{x_i}p_s(x_{i}|x^{i-1}) \log
    \frac{p_s(x_i|x^{i-1})}{p_a(x_i|x^{i-1})} \\ & = \sum^\infty_{i=1}
    \sum_{x^{i-1}} p_s(x^{i-1}) \sum_{x_i}p_s(x_{i}|q_s(x^{i-1})) \log
    \frac{p_s(x_i|q_s(x^{i-1}))}{p_a(x_i|q_a(x^{i-1}))} \\ & =
    \sum_{q_a \in Q_a} \sum_{q_s \in Q_s} \sum^\infty_{i=1}
    \sum_{x^{i-1}: q_s(x^{i-1})=q_s, q_a(x^{i-1})=q_a} p_s(x^{i-1})
    \sum_{x_i}p_s(x_{i}|q_s) \log \frac{p_s(x_i|q_s)}{p_a(x_i|q_a)}
    \\ & = \sum_{q_a \in Q_a} \sum_{q_s \in Q_s} \sum_{x_i}
    \gamma(q_s, q_a) \, p_s(x_i|q_s)  \log \frac{p_s(x_i|q_s)}{p_a(x_i|q_a)}.
  \end{align*}
  Replacing $x_i$ by $x$ yields the lemma.
  \end{proof}

The quantity $\gamma(q_s, q_a)$ counts each string $x^i$
that reaches both state $q_s$ in $Q_s$ and
state $q_a$ in $Q_a$ weighted by its probability according
to $p_s$.
Equivalently it counts each string $x^i$
reaching state $(q_s, q_a)$ in $Q_s \times Q_a$ (an interpretation
we develop in Section~\ref{sec:wfa_source}).\footnote{
  $\gamma$ is not (necessarily) a probability distribution
  over $Q_s \times Q_a$. In fact, $\gamma(q_s, q_a)$ can be greater than $1$.}
Note $\gamma$ does not depend on distribution $p_a$.

The following corollary is useful for finding the probabilistic
model $p_a$ that has the minimal KL divergence from a model
$p_s$.

\begin{corollary}
  Let $\sP$ be a set of probabilistic models $p_a$
for which $\cL(p_s) \subseteq \cL(p_a)$.
Then
  \begin{equation}
    \argmin_{p_a \in \sP} D(p_s ||p_a)  \nonumber \\
= \argmax_{p_a \in \sP}
\sum_{q_a \in Q_a} \sum_{x\in \Sigma} c(x,q_a) \log p_a(x|q_a), \label{eq:kl2}
    \end{equation}
  where
  \begin{equation}
 c( x, q_a) =
     \sum_{q_s \in Q_s} \gamma(q_s, q_a) \, p_s(x|q_s).
    \label{eq:cfactor0}
    \end{equation}
  \label{thm:kldivmin}
  \end{corollary}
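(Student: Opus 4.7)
The plan is to start from the KL-divergence decomposition in Lemma~\ref{thm:kldiv} and peel off the terms that do not depend on $p_a$. By Lemma~\ref{thm:kldiv} (which applies because the corollary restricts to $p_a \in \sP$ with $\cL(p_s) \subseteq \cL(p_a)$), I can write
\begin{align*}
D(p_s \| p_a) &= \sum_{q_a \in Q_a} \sum_{q_s \in Q_s} \sum_{x \in \Sigma} \gamma(q_s, q_a) \, p_s(x|q_s) \log p_s(x|q_s) \\
&\quad - \sum_{q_a \in Q_a} \sum_{q_s \in Q_s} \sum_{x \in \Sigma} \gamma(q_s, q_a) \, p_s(x|q_s) \log p_a(x|q_a).
\end{align*}
The first term depends only on $p_s$ (and on the topology of $p_a$ through $\gamma$, but not on the weights $p_a(x|q_a)$), so it is a constant with respect to the minimization over $p_a$.

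Next, I would interchange the order of summation in the second term, pushing the sum over $q_s$ inside so it acts only on the factor $\gamma(q_s, q_a)\, p_s(x|q_s)$, and then apply the definition $c(x, q_a) = \sum_{q_s} \gamma(q_s, q_a)\, p_s(x|q_s)$ from equation~\eqref{eq:cfactor0}. This collapses the expression to $\sum_{q_a} \sum_{x} c(x, q_a) \log p_a(x|q_a)$. Since minimizing $D(p_s \| p_a)$ over $\sP$ is equivalent to maximizing its negation up to an additive constant, the $\argmin$ equals the $\argmax$ of this quantity, which is precisely the claim.

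There is essentially no obstacle beyond bookkeeping; the only point that deserves a brief comment is that the omitted constant is genuinely finite (so that ``subtracting'' it is harmless), which follows from the paper's standing assumption $H(p_s) < \infty$ together with the fact that $\sum_{q_a} \gamma(q_s, q_a)$ is exactly the expected number of visits to $q_s$ under $p_s$, so the dropped term is bounded in absolute value by $H(p_s)$. Also, the interchange of sums is legitimate because all summands in the $p_a$-dependent part share the sign of $\log p_a(x|q_a)$ up to a nonnegative weight, and Lemma~\ref{thm:kldiv} already established absolute convergence of the full sum.
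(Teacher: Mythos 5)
Your proposal is correct and follows essentially the same route as the paper's proof: apply Lemma~\ref{thm:kldiv}, split $\log\frac{p_s(x|q_s)}{p_a(x|q_a)}$ into a $p_a$-independent term that can be dropped and a cross-entropy term, then absorb the sum over $q_s$ into the definition of $c(x,q_a)$ and flip the sign to turn the $\argmin$ into an $\argmax$. Your added remarks on the finiteness of the dropped constant (via $H(p_s)<\infty$) and the legitimacy of the sum interchange are sensible extra care that the paper leaves implicit.
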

\begin{proof}
By Lemma~\ref{thm:kldiv}
\begin{align}
\argmin_{p_a \in \sP} D(p_s || p_a)
= & \argmin_{p_a \in \sP}
\sum_{q_a \in Q_a} \sum_{q_s \in Q_s} \sum_{x\in
      \Sigma} \gamma(q_s,q_a)\, p_s(x|q_s) \log
    \frac{p_s(x|q_s)}{p_a(x|q_a)} \nonumber \\
= & \argmin_{p_a \in \sP} \Big \{
\sum_{q_a \in Q_a} \sum_{q_s \in Q_s} \sum_{x\in
      \Sigma} \gamma(q_s,q_a)\, p_s(x|q_s) \log p_s(x|q_s) \nonumber \\ &
\qquad\quad - \sum_{q_a \in Q_a} \sum_{q_s \in Q_s} \sum_{x\in
      \Sigma} \gamma(q_s,q_a)\, p_s(x|q_s)
    \log {p_a(x|q_a)} \Big \} \label{eq:minterms} \\
= & \argmax_{p_a \in \sP} \sum_{q_a \in Q_a} \sum_{x \in \Sigma}
    c(x, q_a) \log p_a(x|q_a), \nonumber
\end{align}
since the first term in Equation~\ref{eq:minterms} does not depend on $p_a$.
\end{proof}
The quantity $c(x_i, q_a)$ counts each string $x^i$ that reaches
a state $(q_s, q_a)$ in $Q_s \times \{q_a\}$ by $x_1^{i-1}$ weighted
by its probability according to $p_s$ (cf. Equation \ref{eq:transcnt2}).

\subsection{Weighted finite automata}
\label{subsec:wfa}
A weighted finite automaton $A = (\Sigma, Q, E, i, f)$
over $\Rsetp$ is given by a finite alphabet
$\Sigma$, a finite set of states $Q$,
a finite set of transitions
$E \subseteq Q \times \Sigma \times \Rsetp \times Q$,
an initial state $i \in Q$ and a final state $f \in Q$.
A transition $e = (\src{e}, \lab{e}, \weight{e}, \dst{e}) \in E$ represents
a move from a {\em previous} (or {\em source}) state
$\src{e}$ to the {\em next} (or {\em destination}) state $\dst{e}$ with
the {\em label} $\lab{e}$ and {\em weight} $\weight{e}$. The
transitions with previous state $q$ are denoted by $E[q]$ and the
labels of those transitions as $L[q]$.

A deterministic WFA has at most one transition with a given label leaving
each state.
An {\em unweighted (finite) automaton} is a WFA that satisfies
$\weight{e} = 1, \forall e \in E$.
A {\em probabilistic (or stochastic) WFA} satisfies
\[
\sum_{e \in E[q]} \weight{e} = 1 \text{ and } \weight{e} \geq 0,
\quad\forall q \in Q - \{f\}.
\]

Transitions $e_1$ and $e_2$ are {\em consecutive}
if $\dst{e_i} = \src{e_{i+1}}$.
A path $\pi = e_1 \cdots e_n \in E^*$ is a finite sequence of consecutive
transitions.
The previous state of a path we denote by $\src{\pi}$ and the
next state by $\dst{\pi}$.
The label of a path is the concatenation of its transition labels
$\lab{\pi} = \lab{e_1} \cdots \lab{e_n}$.
The weight of a path is obtained by multiplying
its transition weights
$\weight{\pi} = \weight{e_1} \times \cdots \times \weight{e_n}$.
For a non-empty path, the $i$-th transition is denoted by
$\pi_i$.

$P(q,q')$ denotes the set of all paths in $A$ from state $q$ to $q'$.
We extend this to sets in the obvious way:
$P(q, R)$ denotes the set of all paths from state
$q$ to $q' \in R$ and so forth.
A path $\pi$ is successful if it is in $P(i, f)$ and in that case
the automaton is said to accept
the input string $\alpha = \lab{\pi}$.

The language accepted by an automaton $A$ is the regular set $\cL(A) =
\{\alpha \in \Sigma^* : \alpha=\lab{\pi}, \pi \in P(i,f) \}$.  The
weight of $\alpha \in \cL(A)$ assigned by the automaton is $A(\alpha)
= \Sigma_{\pi \in P(i, f) : \, \lab{\pi} = \alpha} \weight{\pi}$.
Similar to Equation \ref{eq:language}, we assume a symbol $\eos \in
\Sigma$ such that
\[
\cL(A) \subseteq \{x^n \in \Sigma^* : x_n = \eos \text{ and } x_i \neq \eos, \forall\ i < n\}.
\]
Thus all successful paths are terminated by the symbol $\eos$.

For a symbol $x \in \Sigma$ and a state $q \in Q$ of a deterministic,
probabilistic WFA A, define a distribution $p_a(x | q) \triangleq w
\text { if } (q, x, w, q') \in E$ and $p_a(x|q) \triangleq 0$
otherwise.  Then $p_a$ is a probabilistic model over $\Sigma$ as
defined in the previous section. If $A = (\Sigma, Q, E, i, f)$ is an
unweighted deterministic automaton, we denote by $\sP(A)$ the set of
all probabilistic models $p_a$ representable as a weighted WFA
$\weighted{A} = (\Sigma, Q, \weighted{E}, i, f)$ with the same
topology as $A$ where $\weighted{E} = \{(q,x,p_a(x|q),q') : (q,x,1,q')
\in E \}$.

Given an
unweighted deterministic automaton $A$,
our goal is to find the target distribution
$p_a \in \sP(A)$ that has the minimum KL
divergence from our source probability model $p_s$.
\begin{lemma}
If $\cL(p_s) \subseteq \cL(A)$, then
 \begin{equation}
\argmin_{p_a \in \sP(A)} D(p_s || p_a) = \best{p}(x|q_a)
\triangleq \frac{c(x, q_a)}{c(q_a)},
\label{eq:marginal}
\end{equation}
where
\[
c(q_a) = \sum_{x \in \Sigma} c(x, q_a).
\]
\label{thm:exact}
\end{lemma}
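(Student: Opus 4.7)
The plan is to invoke Corollary~\ref{thm:kldivmin} to convert the KL-minimization into an equivalent maximization, then show the maximization decomposes state-by-state, and finally solve the per-state problem as a standard constrained optimization.

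First, I would apply Corollary~\ref{thm:kldivmin} with $\sP = \sP(A)$ to rewrite
\[
\argmin_{p_a \in \sP(A)} D(p_s || p_a) = \argmax_{p_a \in \sP(A)} \sum_{q_a \in Q_a} \sum_{x \in \Sigma} c(x,q_a) \log p_a(x|q_a).
\]
Because $A$ is deterministic and $p_a \in \sP(A)$ is defined only by the weights on the transitions of $A$, the support of $p_a(\cdot | q_a)$ is exactly $L[q_a]$, and $\sum_{x \in L[q_a]} p_a(x|q_a) = 1$ for each non-final state $q_a$. Crucially, these constraints decouple across states $q_a$, and the objective is additive over $q_a$, so the global maximum is obtained by solving one independent problem per state.

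Next, I need to verify that the terms with $x \notin L[q_a]$ can be dropped from the objective without loss, i.e.\ that $c(x,q_a)=0$ whenever there is no $A$-transition from $q_a$ labeled $x$. Suppose for contradiction $c(x,q_a)>0$; then by Equation~\ref{eq:cfactor0} there exist $q_s$ and a prefix $x^{i-1}$ with $p_s(x^{i-1})>0$, $q_s(x^{i-1})=q_s$, $q_a(x^{i-1})=q_a$, and $p_s(x|q_s)>0$. By the assumption $\cL(p_s)\subseteq \cL(A)$ (together with the fact that any positive-probability prefix under $p_s$ extends to a string in $\cL(p_s)$ via the $\eos$ stopping symbol), the string $x^{i-1}x$ must be readable in $A$ from the initial state, and determinism of $A$ then forces a transition labeled $x$ out of $q_a$, contradiction. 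This is the step I expect to require the most care, since it hinges on relating support of $p_s$ on prefixes to the language inclusion stated for complete strings.

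Having localized the problem, for each $q_a$ with $c(q_a)>0$ I would maximize $\sum_{x \in L[q_a]} c(x,q_a) \log p_a(x|q_a)$ subject to $\sum_{x \in L[q_a]} p_a(x|q_a) = 1$. This is a textbook constrained optimization; either a single-line Lagrange-multiplier computation (giving $p_a(x|q_a) \propto c(x,q_a)$) or the equivalent Gibbs-inequality argument
\[
\sum_{x} c(x,q_a) \log p_a(x|q_a) - \sum_{x} c(x,q_a) \log \tfrac{c(x,q_a)}{c(q_a)} = c(q_a) \sum_x \tfrac{c(x,q_a)}{c(q_a)} \log \tfrac{p_a(x|q_a) c(q_a)}{c(x,q_a)} \le 0
\]
yields $\best{p}(x|q_a) = c(x,q_a)/c(q_a)$. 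For states $q_a$ with $c(q_a)=0$ (unreachable from the joint source--target viewpoint), the objective contributes $0$ regardless of the weights assigned, so any choice is optimal and the convention $0/0=1$ adopted in the paper makes the displayed formula well-defined. Combining the per-state optima over all $q_a$ gives the claimed global minimizer.
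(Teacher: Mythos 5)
Your proposal is correct and follows essentially the same route as the paper: invoke Corollary~\ref{thm:kldivmin}, decompose the objective state-by-state, and recognize the per-state problem as a cross-entropy (Gibbs-inequality) minimization solved by $\best{p}(x|q_a) = c(x,q_a)/c(q_a)$. The only difference is one of detail: you spell out the support argument showing $c(x,q_a)=0$ for $x \notin L[q_a]$, which the paper compresses into the single closing sentence that $\cL(p_s) \subseteq \cL(A)$ implies $\best{p} \in \sP(A)$.
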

\begin{proof}
From Corollary~\ref{thm:kldivmin}
\begin{align*}
\argmin_{p_a \in \sP(A)} D(p_s ||p_a) & =
\argmax_{p_a \in \sP(A)}
\sum_{q_a \in Q_a} \sum_{x\in \Sigma} c(x,q_a) \log p_a(x|q_a)\\
& = \argmax_{p_a \in \sP(A)}
\sum_{q_a \in Q_a}
\sum_{x\in \Sigma}
c(q_a)\,\best{p}(x|q_a) \log p_a(x|q_a)\\
& = \argmin_{p_a \in \sP(A)}
\sum_{q_a \in Q_a} c(q_a)
\Big \{- \sum_{x\in \Sigma} \best{p}(x|q_a) \log p_a(x|q_a) \Big \}.
\end{align*}
The quantity in braces is minimized when
$p_a(x | q_a) = \best{p}(x|q_a)$ since it is the cross entropy
between the two distributions.
Since $\cL(p_s) \subseteq \cL(A)$, it follows that $\best{p} \in \sP(A)$.
  \end{proof}

\subsection{Weighted finite automata with failure transitions}
\label{subsec:phiwfa}
A {\em weighted finite automaton with failure transitions}
({\em $\varphi$-WFA})
$A = (\Sigma, Q, E, i, f)$ is a WFA extended to
allow a transition to have
a special {\em failure} label denoted by $\varphi$. Then
$E \subseteq Q \times (\Sigma \cup \{\varphi\}) \times \Rsetp \times Q$.

A $\varphi$ transition does not add to a path label; it consumes
no input.\footnote{In other words, a $\varphi$ label, like an $\epsilon$
label, acts as an identity element in string concatenation
\cite{allauzen2018}.}.
However it is followed only when the input cannot be read immediately.
Specifically, a path $e_1\cdots e_n$ in a $\varphi$-WFA is {\em disallowed}
if it contains a subpath $e_i \cdots e_j$ such that
$\lab{e_k} = \varphi$ for all $k$, $i \leq k < j$, and there is another transition $e \in E$
such that $\src{e_i} = \src{e}$ and $\lab{e_j} = \lab{e} \in \Sigma$
(see Figure~\ref{fig:forbid}). Since the label $x = l[e_j]$ can be
read on $e$,
we do not follow the failure transitions to read it on $e_j$ as well.

\begin{figure}[t]
\vspace{-6ex}
\begin{center}
\includegraphics[scale=.55]{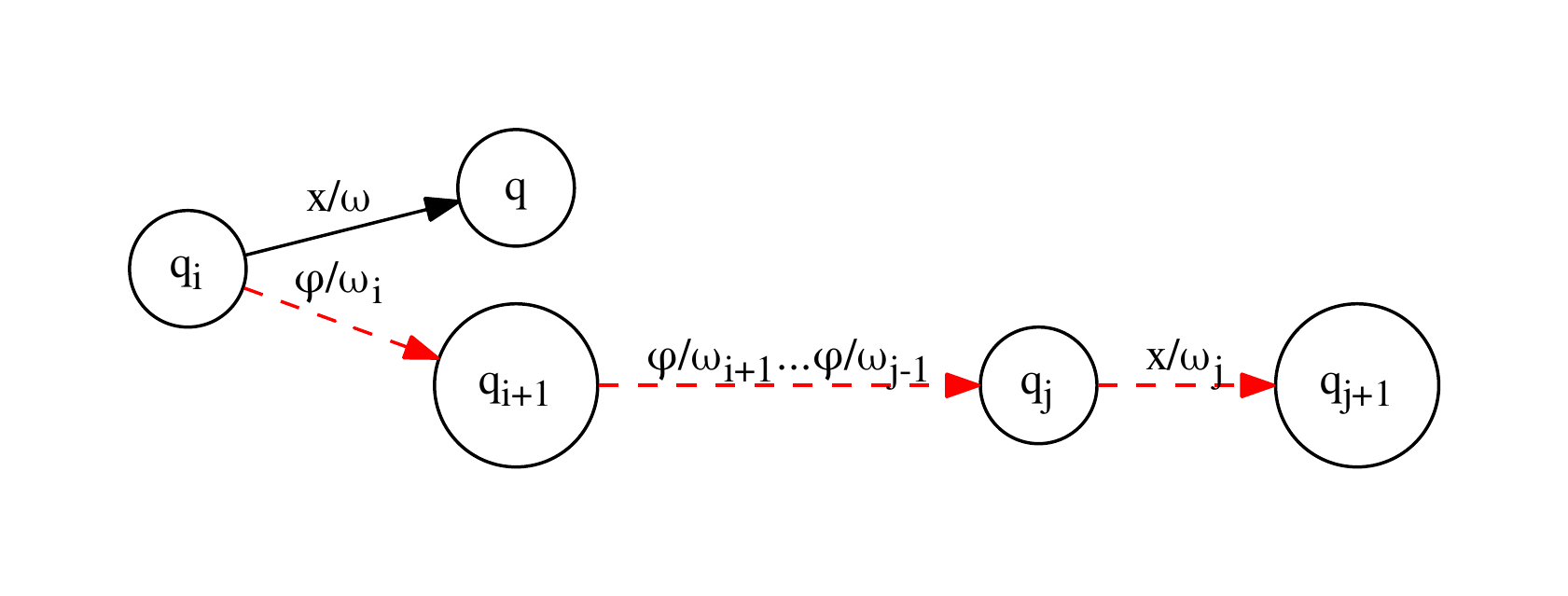}\\
\end{center}
\vspace{-6ex}
\caption{The (dashed red) path $e_i = (q_i, \varphi, \omega_i, q_{i+1})$ to
  $e_j = (q_j, x, \omega_j, q_{j+1})$ is disallowed since $x$ can
  be read already on $e = (q_i, x, \omega, q)$.
\label{fig:forbid}}
\end{figure}

We use $P^*(q, q') \subseteq P(q, q')$ to denote
the set of (not dis-) allowed paths from state $q$ to $q'$ in a $\varphi$-WFA.
This again extends to sets in the obvious way.
A path $\pi$ is successful in
a $\varphi$-WFA if $\pi \in P^*(i, f)$ and only in that
case is the input string $\alpha = \lab{\pi}$ accepted.

The language accepted by the $\varphi$-automaton $A$ is the set
$\cL(A) = \{\alpha \in \Sigma^* : \alpha=\lab{\pi}, \pi \in P^*(i,f) \}$.
The weight of  $\alpha \in \Sigma^*$ assigned by the automaton is
$A{(\alpha)} = \Sigma_{\pi \in P^*(i, f) : \, \lab{\pi} = \alpha}
\weight{\pi}$. We assume each string in $\cL(A)$ is terminated by the
symbol $\eos$ as before. We also assume there are no $\varphi$-labeled cycles and
there is at most one exiting failure transition per state.

We express the
{\em $\varphi$-extended transitions} leaving $q$ as
\begin{align*}\
  E^*[q] = \Big \{ (q,x,\omega,q')
:\quad & \pi \in P^*(q,Q),
   x = \lab{\pi} = \lastlab{\pi} \in \Sigma,
 \omega = \weight{\pi}, q' = \dst{\pi} \Big \}.
\end{align*}
This is a set of (possibly new) transitions $(q,x,\omega,q')$, one for each
allowed path from previous state $q$ to next state $q'$
with optional leading failure transitions and a final $x$-labeled transition.
Denote the labels of $E^*[q]$ by $L^*[q]$. Note
the WFA $(\Sigma, Q, E^*, i, f)$ accepts the same strings with the same weights
as $\varphi$-WFA $A$ and thus $\cL(A)$ is regular \cite{allauzen2018}.

A {\em probabilistic (or stochastic) $\varphi$-WFA} satisfies
\[
\sum_{e \in E^*[q]} \weight{e} = 1 \text{ and } \weight{e} \geq 0,
\quad\forall q \in Q - \{f\}.
\]

A deterministic $\varphi$-WFA is {\em backoff-complete} if a
failure transition from state $q$ to $q'$ implies $L[q] \cap \Sigma
\subseteq L[q'] \cap \Sigma$. Further, if $\varphi \notin L[q']$,
then the containment is strict: $L[q] \cap \Sigma \subset L[q'] \cap \Sigma$.
In other words, if
a symbol can be read immediately from a state $q$ it can also be read
from a state failing ({\em backing-off}) from $q$ and if $q'$ does not
have a backoff arc, then at least one additional label can be read
from $q'$ that cannot be read from $q$. For example, both topologies
depicted in Figure~\ref{fig:top} have this property.  We restrict
our target automata to have a topology
with the backoff-complete property since it will simplify our analysis,
make our algorithms efficient and is commonly found in applications.
For example backoff $n$-gram models, such
the Katz model, are $\varphi$-cycle-free, backoff-complete $\varphi$-WFAs \cite{katz,Chen98}.

For a symbol $x \in \Sigma$ and a state $q \in Q$
of a deterministic, probabilistic $\varphi$-WFA A, define
$p^*_a(x | q) \triangleq w \text { if } (q, x, w, q') \in E^*[q]$ and $p^*_a(x | q) \triangleq 0$
otherwise. Then $p^*_a$ is a probabilistic model over $\Sigma$ as defined
in Section~\ref{subsec:probmod}. Note the distribution
$p^*_a$ at a state $q$ is defined over the $\varphi-$extended transitions
$E^*[q]$ where $p_a$ in the previous section is defined
over the transitions $E[q]$. It is convenient to define
a companion distribution $p_a \in \sP(A)$
to $p_a^*$ as follows:\footnote{The meaning of $\sP(A)$
when $A$ is $\varphi$-WFA is to interpret it as a WFA with
the failure labels as regular symbols.}
given a symbol $x \in \Sigma\cup\{\varphi\}$ and state
$q \in Q$, define $p_a(x | q) \triangleq p^*_a(x | q)$
when $x \in L[q]\cap\Sigma$,
$p_a(\varphi | q) \triangleq
1 - \sum_{x \in L[q]\cap\Sigma} p^*_a(x | q)$, and $p_a(x | q) \triangleq 0$ otherwise. The companion distribution is thus defined solely over the transitions
$E[q]$.

When $A = (\Sigma, Q, E, i, f)$ is an unweighted
deterministic, backoff-complete $\varphi$-WFA,
we denote by $\sP^*(A)$ the set of all probabilistic models $p^*_a$
representable as a weighted
$\varphi$-WFA $\weighted{A} = (\Sigma, Q, \weighted{E}, i, f)$
of same topology as $A$ with
\begin{align*}
\weighted{E}   = & \{(q,x,p_a(x|q),q') : (q,x,1,q') \in E, x \in \Sigma \}\, \cup \\
& \{(q,\varphi,\alpha(q,q'),q') : (q,\varphi,1,q') \in E\},
\end{align*}
where $p_a \in \sP(A)$ is the companion distribution to $p_a^*$ and
$\alpha(q,q') = p_a(\varphi|q)/d(q,q')$
is the weight of the failure transition from state $q$ to $q'$ with
\begin{equation}
d(q,q') = 1 - \sum_{x \in L[q]\cap\Sigma} p_a(x|q').
\label{eq:denom}
\end{equation}
Note we have specified the weights on the
automaton that represents $p_a^* \in \sP^*(A)$ entirely in terms
of the companion distribution $p_a \in \sP(A)$.
The failure transition weight $\alpha$ is determined from
the requirement that $\sum_x E^*[q] = 1$ \cite[Equation 16]{katz}.
Thanks to the backoff-complete property, this transition weight
depends only on its previous state $q$ and its next state
$q'$ and not all the states in the failure path from $q$.
This is a key reason for assuming that property; otherwise our algorithms could
not benefit from this locality.

Conversely, each distribution $p_a \in \sP(A)$ can be associated
to a distribution $p^*_a \in \sP^*(A)$
given a deterministic, backoff-complete $\varphi$-WFA $A$.
First extend $\alpha(q,q')$ to any failure path as follows.
Denote a failure path from state $q$ to
$q'$ by $\pi_\varphi(q,q')$. Then define
\begin{equation}
  \label{eq:backoff}
\alpha(q,q') = \prod_{e \in \pi_\varphi(q,q')}
\frac{p_a(\varphi|\src{e})}{d(\src{e}, \dst{e})},
\end{equation}
where this quantity is taken to be $1$ when the failure path is empty
($q = q'$). Finally define
\begin{equation}
p^*_a(x|q) =
\begin{cases}
\alpha(q,q^x)\, p_a(x|q^x),&x \in L^*[q]\\
0, &\text{otherwise}
\end{cases}
\label{eq:pstar}
\end{equation}
where for $x \in L^*[q]$, $q^x$ signifies the first state $q'$ on a
$\varphi$-labeled path in $A$ from state $q$ for which $x \in L[q']$.

For \eqref{eq:backoff} to be well-defined, we need $d(\src{e},
\dst{e}) > 0$. To ensure this condition, we restrict $\sP(A)$ to
contain distributions such that $p_a(x|q) \geq \epsilon$ for each $x
\in L[q]$.\footnote{For brevity, we do not include $\epsilon$ in the
  notation of $\sP(A)$.}

Given an unweighted deterministic, backoff-complete,
automaton $A$, our goal is to find the target distribution $p^*_a \in
\sP^*(A)$ that has the minimum KL divergence from our source
probability model $p_s$.

\begin{lemma}
Assume $\cL(p_s) \subseteq \cL(A)$. Let $p^* = \best{p}(x|q_a)$ from
Lemma~\ref{thm:exact}.
If $\cL(p^*) \subseteq \cL(A)$ and $p^* \in \sP^*(A)$
then
\[
p^* = \argmin_{p^*_a \in \sP^*(A)} D(p_s || p^*_a).
\]
\label{thm:phiexact}
\end{lemma}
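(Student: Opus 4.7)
The plan is to reduce Lemma~\ref{thm:phiexact} to Lemma~\ref{thm:exact} by re-interpreting $A$ as the ordinary WFA whose transitions are the $\varphi$-extended edges. Let $A' = (\Sigma, Q, E^*, i, f)$ be this unweighted automaton, where $E^*$ is the $\varphi$-extended transition set introduced earlier. Then $A'$ is deterministic, shares its state set with $A$, and satisfies $\cL(A') = \cL(A)$, as already noted in the text. The proof consists of three observations.

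First, I would establish the containment $\sP^*(A) \subseteq \sP(A')$. For any $p_a^* \in \sP^*(A)$, the probabilistic condition $\sum_{e \in E^*[q]} \weight{e} = 1$ says precisely that $p_a^*(\cdot | q)$ is a probability distribution over the $\Sigma$-labels $L^*[q]$ outgoing from $q$ in $A'$; that is exactly what it means to be a member of $\sP(A')$. The containment is generally strict: in $\sP(A')$ every extended-edge weight can be chosen independently, whereas in $\sP^*(A)$ these weights are tied together through the companion distribution $p_a$ and the failure-path product $\alpha(q,q')$ from~\eqref{eq:backoff}.

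Second, I would apply Lemma~\ref{thm:exact} directly to the pair $(p_s, A')$. The state reached by reading any $x^i \in \Sigma^*$ is the same in $A$ and in $A'$ (by determinism together with $\cL(A) = \cL(A')$), so $\gamma(q_s, q_a)$ and $c(x, q_a)$ do not change when the target automaton is replaced by $A'$. The hypothesis $\cL(p_s) \subseteq \cL(A) = \cL(A')$ keeps the divergence finite, and Lemma~\ref{thm:exact} then yields
\[
\argmin_{p_a \in \sP(A')} D(p_s || p_a) \;=\; \best{p}(x | q_a) \;=\; \frac{c(x, q_a)}{c(q_a)} \;=\; p^*.
\]

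Finally, by the hypothesis $p^* \in \sP^*(A)$ combined with Step~1, the minimizer $p^*$ over the larger set $\sP(A')$ already lives in the smaller set $\sP^*(A)$, and hence it is the minimizer over $\sP^*(A)$ as well, which is the desired conclusion. The assumption $\cL(p^*) \subseteq \cL(A)$ is used to guarantee that the $\varphi$-WFA realization of $p^*$ covers the support of $p_s$ and gives a finite divergence. The main obstacle is Step~1 --- verifying that the companion-parameterized weights really do sum to $1$ on $E^*[q]$; this is precisely where the backoff-complete property and the normalizer $d(q,q')$ of~\eqref{eq:denom} earn their keep, since without it the failure-transition weights would not admit a state-local closed form and the required embedding into $\sP(A')$ would fail.
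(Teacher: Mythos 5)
Your proof is correct and takes essentially the same route as the paper, whose entire proof is the single line ``This follows immediately from Lemma~\ref{thm:exact}.'' Your three steps --- embedding $\sP^*(A)$ into the unconstrained class $\sP(A')$ over the $\varphi$-extended transitions, applying Lemma~\ref{thm:exact} to $A'$, and observing that the unconstrained minimizer lies in the constrained set by hypothesis --- are precisely the details that ``immediately'' elides.
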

\begin{proof}
This follows immediately from Lemma~\ref{thm:exact}.
\end{proof}

The requirement that the $p^*$ of Lemma~\ref{thm:phiexact} is in
$\sP^*(A)$ will be true if, for instance, the target has no failure
transitions or if the source and target are both $\varphi$-WFAs with
the same topology and failure transitions. In general, this
requirement can not be assured. While membership in $\sP(A)$
principally requires the weights of the transitions leaving a state
are non-negative and sum to $1$, membership in $\sP^*(A)$
imposes additional constraints due to the failure transitions,
indicated in Equation~\ref{eq:pstar}.

As such, we restate our
goal in terms of the companion distribution $p_a \in \sP(A)$
rather than its corresponding distribution $p^*_a \in \sP^*(A)$ directly.
Let $B_n(q)$ be the set of states in
$A$ that back-off to state $q$ in $n$ failure transitions and let
$B(q) = \bigcup^{|Q_a|}_{n=0} B_n(q)$.

\begin{lemma}
If $\cL(p_s) \subseteq \cL(A)$ then
\[
\argmin_{p^*_a \in \sP^*(A)} D(p_s || p^*_a)
 = \argmax_{p_a \in \sP(A)}
\sum_{q \in Q_a} \bigg \{
\sum_{x \in L[q]} C(x,q) \log p_a(x|q)
- \sum_{q_0 \in B_1(q)} C(\varphi, q_0) \log d(q_0, q)
\bigg \},
\]
where
\begin{align}
C(x, q) & = \sum_{q_a \in B(q)} c(x, q_a)\indic_{q = q_a^x},\quad x \in \Sigma, \label{eq:cfactor1}  \\
C(\varphi, q) & =  \sum_{q_a \in B(q)} \sum_{x \in \Sigma} c(x, q_a) \indic_{x \notin L[q]}. \label{eq:cfactor2}
\end{align}
and do not depend on $p_a$.
\label{thm:phigen}
\end{lemma}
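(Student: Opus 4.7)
The plan is to start from Corollary~\ref{thm:kldivmin}, which already identifies
\[
\argmin_{p^*_a \in \sP^*(A)} D(p_s \| p^*_a) = \argmax_{p^*_a \in \sP^*(A)} \sum_{q_a \in Q_a} \sum_{x \in \Sigma} c(x,q_a) \log p^*_a(x | q_a),
\]
and then rewrite every factor $p^*_a(x | q_a)$ entirely in terms of the companion distribution $p_a \in \sP(A)$ using Equations~\ref{eq:pstar} and~\ref{eq:backoff}. This replaces the argmax over $\sP^*(A)$ with one over $\sP(A)$, matching the form stated in the lemma.

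First I would substitute $p^*_a(x | q_a) = \alpha(q_a, q_a^x)\, p_a(x | q_a^x)$, taking logs to split the contribution into a ``direct'' part $\log p_a(x | q_a^x)$ and a ``backoff'' part $\log \alpha(q_a, q_a^x)$. Expanding the latter via Equation~\ref{eq:backoff} turns it into a telescoping sum over the edges of the failure path $\pi_\varphi(q_a, q_a^x)$:
\[
\log \alpha(q_a, q_a^x) = \sum_{e \in \pi_\varphi(q_a, q_a^x)} \bigl[\log p_a(\varphi | \src{e}) - \log d(\src{e}, \dst{e})\bigr].
\]
I then interchange the order of summation so that the outer index runs over states $q \in Q_a$ rather than over $(q_a, x)$ pairs. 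For the direct term, the coefficient of $\log p_a(x | q)$ collects those $q_a$ with $q_a^x = q$, i.e.\ $q_a \in B(q)$ and $q$ is the first backoff state from $q_a$ that can read $x$; this is precisely $C(x, q)$ from Equation~\ref{eq:cfactor1}. For the backoff terms, I fix a failure transition $e = (q_0, \varphi, \cdot, q)$ with $q_0 \in B_1(q)$ and ask when $e$ lies on $\pi_\varphi(q_a, q_a^x)$: the backoff-complete assumption makes this condition local, namely $q_a \in B(q_0)$ together with $x \notin L[q_0]$ (so that the failure path still has not stopped at $q_0$). Summing $c(x, q_a)$ over these pairs yields exactly $C(\varphi, q_0)$ from Equation~\ref{eq:cfactor2}, so $\log p_a(\varphi | q_0)$ gets coefficient $C(\varphi, q_0)$ (which is folded into $\sum_{x \in L[q_0]} C(x, q_0)\log p_a(x | q_0)$ when $\varphi \in L[q_0]$), and $-\log d(q_0, q)$ gets the same coefficient $C(\varphi, q_0)$, grouped under the outer index $q$ with an inner sum over $q_0 \in B_1(q)$.

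The main obstacle is the reindexing bookkeeping in the third step: one must be confident that every $(q_a, x)$ pair contributes the $\log p_a(\varphi | q_0)$ and $-\log d(q_0, q)$ terms at precisely the right failure edges, and no others. This is where the backoff-complete hypothesis is essential — it guarantees that the failure path from $q_a$ looking for $x$ proceeds deterministically through successive backoff states and stops at the first ancestor where $x$ is locally readable, so the membership condition ``$e \in \pi_\varphi(q_a, q_a^x)$'' reduces to the local pair of conditions ``$q_a \in B(q_0)$ and $x \notin L[q_0]$'' that defines $C(\varphi, q_0)$. Once the coefficients are identified, no further optimization is needed for this lemma: it only rewrites the objective, and the claim follows. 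Finally, note that the constants in the original objective (the $c(x, q_a) \log p_s(x | q_s)$ self-entropy terms hidden inside Corollary~\ref{thm:kldivmin}'s derivation, as well as any $c(x, q_a)$ contributions from $(x, q_a)$ with $q_a^x$ undefined, which cannot occur since $\cL(p_s) \subseteq \cL(A)$) do not affect the argmax, completing the reduction.
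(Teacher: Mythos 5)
Your proposal is correct and follows essentially the same route as the paper's own proof: start from Corollary~\ref{thm:kldivmin}, substitute $p^*_a(x|q_a)=\alpha(q_a,q_a^x)\,p_a(x|q_a^x)$ via Equations~\ref{eq:pstar} and~\ref{eq:backoff}, split the logarithm into the direct, $\varphi$, and $d$ contributions, and reindex by target states using the backoff-complete property to localize the condition $e\in\pi_\varphi(q_a,q_a^x)$ to $q_a\in B(q_0)$ and $x\notin L[q_0]$. The paper carries out exactly this three-term decomposition (its $A_x$, $A_\varphi$, $A_d$), so no substantive difference remains.
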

\begin{proof}
  From Corollary~\ref{thm:kldivmin}, Equation~\ref{eq:pstar} and the previously
  shown 1:1 correspondence between each distribution $p^*_a \in \sP^*(A)$
  and its companion distribution $p_a \in \sP(A)$
\begin{align}
\argmin_{p^*_a \in \sP^*(A)} D(p_s || p^*_a) & =
\argmin_{p^*_a \in \sP^*(A)} \sum_{q_a \in Q_a}
    \sum_{x\in L^*[q_a]} c(x, q_a) \log p^*_a(x|q_a) \nonumber\\
& = \argmax_{p_a \in \sP(A)} \sum_{q_a \in Q_a}
    \sum_{x\in L^*[q_a]} c(x, q_a) \log
    \alpha(q_a,q^x_a) p_a(x|q^x_a)\nonumber\\
& = \argmax_{p_a \in \sP(A)} \sum_{q_a \in Q_a} \sum_{x \in L^*[q_a]}
c(x, q_a) \log \prod_{e \in \pi_\varphi(q_a,q^x_a)}
\frac{p_a(\varphi|\src{e})}{d(\src{e}, \dst{e})} p_a(x|q^x_a) \nonumber\\
& = \argmax_{p_a \in \sP(A)} \bigg \{ A_x + A_\varphi - A_d \bigg \}, \label{eq:maxfactors}
\end{align}
where we distribute the factors inside the logarithm in
Equation~\ref{eq:maxfactors} as follows:
\begin{align}
A_x & = \sum_{q_a \in Q_a} \sum_{x \in L^*[q_a]} c(x, q_a) \log p_a(x|q^x_a) \nonumber \\
& = \sum_{q \in Q_a} \sum_{q_a \in B(q)} \sum_{x \in L[q]\cap\Sigma}
c(x, q_a) \indic_{q=q^x_a} \log p_a(x|q) \label{eq:indicx}\\
& = \sum_{q \in Q_a} \sum_{x\in L[q]\cap\Sigma} C(x, q) \log p_a(x|q).\nonumber
\end{align}
Equation~\ref{eq:indicx} follows from $q = q^x_a$ implying
$q_a \in B(q)$.
\begin{align}
A_\varphi & = \sum_{q_a \in Q_a } \sum_{x \in L^*[q_a]}
c(x, q_a) \log
\prod_{e \in \pi_\varphi(q_a,q^x_a)} p_a(\varphi|\src{e})\nonumber\\
& =  \sum_{q_a \in Q_a} \sum_{x \in L^*[q_a]}
c(x, q_a)
\sum_{e \in \pi_\varphi(q_a,q^x_a)} \log
p_a(\varphi|\src{e})\nonumber\\
& =  \sum_{q \in Q_a}\sum_{q_a \in B(q)} \sum_{x \in L^*[q_a]}
c(x, q_a)
\sum_{e \in \pi_\varphi(q_a,q^x_a)} \indic_{q=p[e]}
\log
p_a(\varphi|q)\nonumber\\
& = \sum_{q \in Q_a} \sum_{q_a \in B(q)} \sum_{x \in \Sigma}
c(x, q_a) \indic_{x \notin L[q]} \log p_a(\varphi|q) \label{eq:indicphi}\\
& = \sum_{q \in Q_a} C(\varphi, q) \log p_a(\varphi|q).\nonumber
\end{align}
Equation~\ref{eq:indicphi} follows from $e \in \pi_\varphi(q_a, q^x_a)$
implying $x \notin{\src{e}}$.
\begin{align}
A_d & = \sum_{q_a \in Q_a} \sum_{x \in L^*[q_a]}
c(x, q_a) \log
\prod_{e \in \pi_\varphi(q_a,q^x_a)} d(\src{e}, \dst{e})\nonumber\\
& = \sum_{q_a \in Q_a} \sum_{x \in L^*[q_a]}
c(x, q_a)
\sum_{e \in \pi_\varphi(q_a,q^x_a)} \log
 d(\src{e}, \dst{e})\nonumber\\
& = \sum_{q \in Q_a} \sum_{q_a \in B(q_0)} \sum_{q_0 \in B_1(q)}  \sum_{x \in L^*[q_a]}
c(x, q_a)
\sum_{e \in \pi_\varphi(q_a,q^x_a)} \indic_{q_0=\src{e}}\log
 d(q_0, q)\nonumber\\
& = \sum_{q \in Q_a} \sum_{q_a \in B(q_0)} \sum_{q_0 \in B_1(q)} \sum_{x \in \Sigma}
c(x, q_a) \indic_{x \notin L[q_0]} \log d(q_0, q) \nonumber \\
& =  \sum_{q \in Q_a} \sum_{q_0\in B_1(q)} C(\varphi, q_0) \log d(q_0, q).\nonumber
\end{align}
Substituting these results into Equation~\ref{eq:maxfactors} proves the lemma.
\end{proof}

If there are no failure transitions in the target
automaton, then $C(x, q) = c(x, q)$,
if $x$ is in $\Sigma$, and is $0$ otherwise.
In this case, the statement of Lemma~\ref{thm:phigen} simplifies to that
of Corollary~\ref{thm:kldivmin}.

Unfortunately, we do not have a closed-form solution, analogous
to Lemma~\ref{thm:exact} or Lemma~\ref{thm:phiexact}, for the general
$\varphi$-WFA case. Instead we will present
numerical optimization algorithms to find the KL divergence minimum
in the next section. This is aided by the observation that
the quantity in braces in the statement of
Lemma~\ref{thm:phigen} depends
on the distribution $p_a$ only at state $q$. Thus the KL divergence minimum
can be found by maximizing that quantity independently for each state.

\section{Algorithms}
\label{sec:algo} 
Approximating a probabilistic source algorithmically
as a weighted finite automaton requires
two steps: (1) compute the quantity $c(x, q_a)$ found in 
Corollary~\ref{thm:kldivmin} and Lemma~\ref{thm:exact} or $C(x, q)$ in
Lemma~\ref{thm:phigen} and (2) use this quantity
to find the minimum KL divergence solution. The first step, which
we will refer to as {\em counting}, is covered in the next section and 
the KL divergence minimization step is covered afterwards, followed
by an explicit comparison of the presented algorithms with closely
related prior work.

\subsection{Counting}
How the counts are computed will depend on the form of the source
and target models. We break this down into several cases.

\subsubsection{WFA source and target}
\label{sec:wfa_source}
When the source and target models are represented as WFAs
we compute $c(x, q_a)$ from Lemma~\ref{thm:exact}.
From  Equation~\ref{eq:cfactor0} this can be written as
\begin{equation}
c(x, q_a) = \sum_{q_s \in Q_s}  \gamma(q_s, q_a) p_s(x|q_s),
\label{eq:transcnt1}
\end{equation}
where
\[
\gamma(q_s, q_a) = \sum^\infty_{i=0} 
\sum_{x^i: q_s(x^i)=q_s, q_a(x^i)=q_a} p_s(x^i).
\]

The quantity $\gamma(q_s, q_a)$ can be computed as
\[
\gamma(q_s, q_a) =
 \sum_{\pi \in P_{S\cap A}((i_s,i_a), (q_s, q_a))} \weight{\pi},
\label{eq:statecnt}
\]
where $S \cap A$ is the weighted finite-state intersection of automata
$S$ and $A$ \cite{Mohri2009}. The above summation over this intersection
is the (generalized)
{\em shortest distance} from the initial state to a specified state computed
over the {\em positive real semiring} \cite{mohri2002,allauzen2018}. 
Algorithms to efficiently
compute the intersection and shortest distance on WFAs are available in
{\tt OpenFst} \cite{allauzen2007}, 
an open-source weighted finite automata library. 

Then from Equation~\ref{eq:transcnt1} we can form the sum
\begin{equation}
c(x, q_a) = \sum_{((q_s, q_a), x, w, (q_s', q_a')) \in E_{S \cap A}}  
\gamma(q_s, q_a)\, w.
\label{eq:transcnt2}
\end{equation}
Equation~\ref{eq:transcnt2} is
the weighted count of the paths in $S\cap A$ that
begin at the initial state and end
in any transition leaving a state $(q_s, q_a)$ labeled with $x$.

The worst-case time complexity for the counting step
is dominated by the shortest distance algorithm
on the intersection $S \cap A$. The shortest distance computation is
a meta-algorithm that depends on the queue discipline selected 
\cite{mohri2002}. If $s$ is the maximum number of times a state in
the intersection
is inserted into the shortest distance queue, $C$ the maximum cost of
a queue operation, and $D$ the maximum out-degree in $S$ and $A$ 
(both assumed deterministic), then the algorithm runs in
$O(s(D+C)|Q_S||Q_A|)$ \cite{mohri2002,allauzen2018}.
The worst-case space complexity is
in $O(D |Q_S||Q_A|)$, determined by the intersection size.
\subsubsection{$\varphi$-WFA source and target}
\label{sec:exact_counting}
When the source and target models
are represented as $\varphi$-WFAs we compute
$C(x, q_a)$ from Lemma~\ref{thm:phigen}.
From Equation~\ref{eq:cfactor1} and the previous case
this can be written as
\begin{equation}
C(x, q)  = \sum_{q_a \in B(q)} \sum_{q_s \in Q_s} \gamma(q_s, q_a) 
p_s(x|q_s) \indic_{q = q_a^x},\quad x \in \Sigma.
\label{eq:phitranscnt1}
\end{equation}
To compute this quantity we first form $S \cap A$
using an efficient $\varphi$-WFA intersection 
that compactly retains failure transitions in the result 
as described in \cite{allauzen2018}.
Equation~\ref{eq:phitranscnt1} is 
the weighted count of the paths in $S\cap A$
allowed by the failure transitions that begin at the initial state 
and end in any transition leaving a state $(q_s, q)$ labeled with $x$.

We can simplify this computation by the following transformation.
First we convert $S\cap A$ to an equivalent WFA by 
replacing each failure transition
with an epsilon transition and introducing a negatively-weighted transition
to compensate for formerly disallowed paths \cite{allauzen2018}.
\begin{figure}[t]
\vspace{-6ex}\hspace*{-6ex}
\begin{tabular}{c@{}c}
\includegraphics[scale=.6]{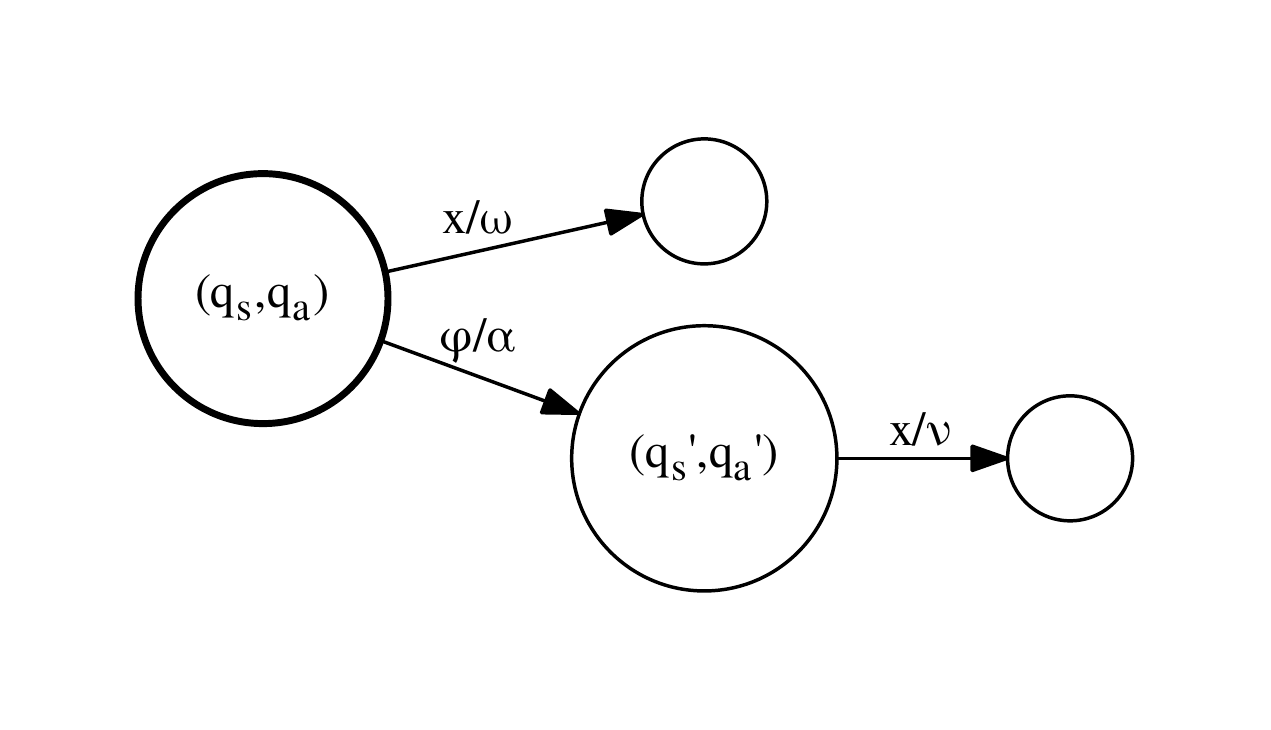} \hspace*{-8ex}& 
\includegraphics[scale=.6]{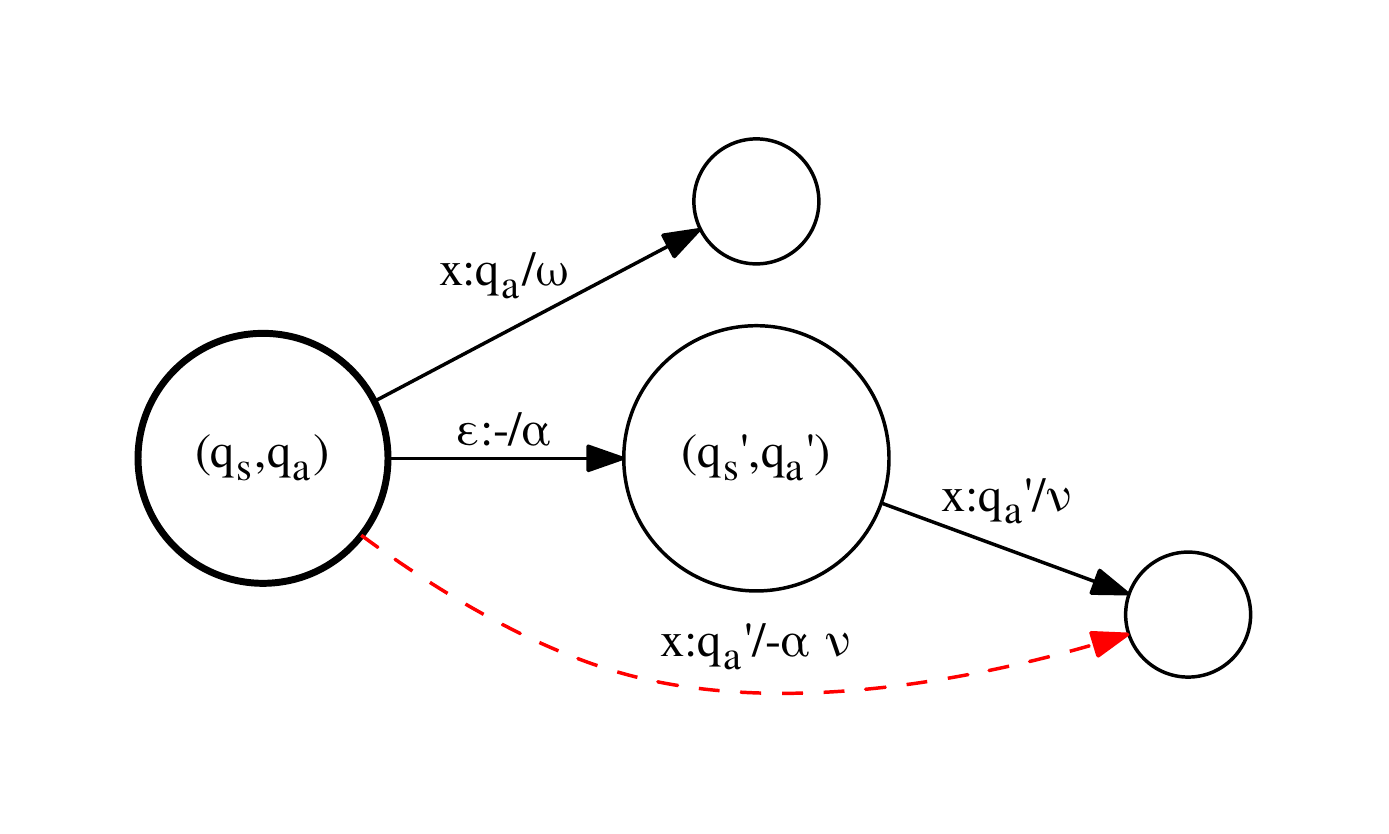}\\
$S \cap A$ & $T$
\end{tabular}
\caption{A $\varphi$-WFA is transformed into an equivalent WFA
by replacing each failure transition by an $\epsilon$-transition. 
To compensate for the formerly disallowed paths, new (dashed
red) negatively-weighted transitions are added.
The result is promoted to a transducer $T$ with the output label
used to keep track of the previous state in $A$ of the compensated 
positive transition.}
\label{fig:phitranscnt}
\end{figure}
The result is then promoted to a transducer $T$ with the
output label used to keep track of the previous state
in $A$ of the compensated 
positive transition (see Figure~\ref{fig:phitranscnt}).\footnote{The 
construction illustrated in Figure~\ref{fig:phitranscnt}
is sufficient when $S \cap A$
is acyclic. In the cyclic case a slightly modified construction is needed
to ensure convergence in the shortest
distance calculation \cite{allauzen2018}}.
Algorithms to efficiently
compute the intersection and shortest distance on $\varphi$-WFAs
are available in the {\tt OpenGrm} libraries \cite{allauzen2018}.

Then
\begin{equation}
C(x, q) =  
\sum_{((q_s, q_a), x, q, w, (q_s', q_a')) \in E_T} 
\gamma_T(q_s, q_a) w, \quad x \in \Sigma,
\label{eq:phitranscnt2}
\end{equation}
where $e = (\src{e}, il[e], ol[e], \weight{e}, \dst{e})$ is a
transition in $T$ and $\gamma_T(q_s, q)$ is the shortest distance from the initial
state to $(q_s, q_a)$ in $T$ computed over the {\em real semiring}
as described in \cite{allauzen2018}.
Equation~\ref{eq:phitranscnt2} is
the weighted count of all paths in $S \cap A$
that begin at the initial state 
and end in any transition leaving a state $(q_s, q)$ labeled with $x$
minus the weighted count of those paths that are disallowed
by the failure transitions.

Finally, we compute $C(\varphi, q)$ as follows. The count mass entering a state
$q$ must equal the count mass leaving a state
\begin{align*}
\sum_{(q_a, x, 1, q) \in E_A} C(x, q_a) &= \sum_{(q, x', 1, q_a') \in E_A} C(x',
 q)\\
&=\sum_{\substack{
(q, x', 1, q_a') \in E_A, x' \in \Sigma}} C(x', q) +  C(\varphi, q)
\end{align*}
Thus
\[
C(\varphi, q) = 
\sum_{(q_a, x, 1, q) \in E_A} C(x, q_a) - \sum_{\substack{
(q, x', 1, q_a') \in E_A, x' \in \Sigma}} C(x', q).
\]
This quantity can be computed iteratively in the topological order of states
with respect to the $\varphi$-labeled transitions.

The worst-case 
time and space complexity for the counting step for $\varphi$-WFAs
is the same as for WFAs \cite{allauzen2018}. 

\subsubsection{Arbitrary source and $\varphi$-WFA target}
\label{sec:arbitrary_counting}

In some cases, the source is a distribution with possibly infinite
states, e.g., LSTMs. For these sources, computing $C(x,q)$ can be
computationally intractable as \eqref{eq:phitranscnt1} requires a
summation over all possible states in the source machine, $Q_s$.
We propose to use a sampling approach to approximate $C(x,q)$ for
these cases.  Let $x(1), x(2),\ldots, x(N)$ be independent random
samples from $p_s$. Instead of $C(x,q)$, we propose to use
\begin{equation*}
\hat{C}(x, q)  = \sum_{q_a \in B(q)} \sum_{q_s \in Q_s} \hat{\gamma}(q_s, q_a) 
p_s(x|q_s) \indic_{q = q_a^x},\quad x \in \Sigma,
\end{equation*}
where 
\begin{equation*}
\hat{\gamma}(q_s, q_a)  =
\frac{1}{N} \sum^N_{j=1} \sum_{i\geq 1}  \indic_{q_s(x^i(j)) = q_s, q_a(x^i(j)) = q_a}.
\end{equation*}
Observe that in expectation,
\begin{align*}
\EE[\hat{\gamma}(q_s, q_a)] &=
\frac{1}{N} \sum^N_{j=1} \sum_{i\geq 1}  \EE[\indic_{q_s(x^i(j)) = q_s, q_a(x^i(j)) = q_a}] \\
&= \sum_{i\geq 1} \sum_{x^i : q_s(x^i) = q_s, q_a(x^i) = q_a} p_s(x^i ) \\
&= \gamma(q_s, q_a),
\end{align*}
and hence $\hat{\gamma}(q_s, q_a)$ is an unbiased, asymptotically consistent
estimator of
$\gamma(q_s, q_a)$. Given $\hat{C}(x,q)$, we compute $C(\varphi, q)$
similarly to the previous section.
If $\ell$ is the expected number of symbols per sample, then the
computational complexity of counting in expectation is in $O(N \ell |\Sigma|)$.

\subsection{KL divergence minimization}
\label{sec:kl}
\subsubsection{WFA target}
When the target topology is a deterministic WFA, we use $c(x, q_a)$ from the
previous section and Lemma~\ref{thm:exact} to immediately find the
minimum KL divergence solution.

\subsubsection{$\varphi$-WFA target}
When the target topology is a deterministic, backoff-complete 
$\varphi$-WFA, Lemma~\ref{thm:phiexact}
can be applied in some circumstances to find the minimum KL divergence
solution but not in general.  However, as noted before,
the quantity in braces in the statement of
Lemma~\ref{thm:phigen} depends on the distribution $p_a$ only at state
$q$ so the minimum KL divergence $D(p_s||p^*_a)$ can be found by
maximizing that quantity independently for each state.

Fix a state $q$ and let  $y_x \triangleq p_a(x|q)$
for $x \in L[q]$ and let $\yvec \triangleq 
[y_x]_{x \in L[q]}$\footnote{We fix some total order on 
$\Sigma\cup\{\varphi\}$ so that
$\yvec$ is well-defined.}. Then our goal reduces to
\begin{equation}
\argmax_{\yvec}  \sum_{x \in L[q]} C(x,q) \log y_x
- \sum_{q_0 \in B_1(q)} C(\varphi, q_0) \log 
\big ( 1 - \sum_{x \in L[q_0]\cap\Sigma} y_x \big ).
\label{eq:klopt}
\end{equation}
subject to the constraints $y_x \geq \epsilon$ for $x \in L[q]$ and
$\sum_{x\in L[q]} y_x = 1$.

This is a difference of two concave functions in $\yvec$ since
$\log(f(\yvec))$ is concave for any linear function $f(\yvec)$,
$C(x, q), C(\varphi, q_0)$ are always non-negative and the sum of concave functions is
also concave. We give a {\em DC programming} solution to this
optimization \cite{horst1999}. Let
\[
\Omega = \{ \yvec : \forall x, y_x \geq \epsilon, \sum_{x \in L(q)} y_x \leq 1 \},
\]
and let $u(\yvec) = \sum_{x \in L[q]} C(x,q) \log y_x $ and $v(\yvec)
= \sum_{q_0 \in B_1(q)} C(\varphi, q_0) \log
\big ( 1 - \sum_{x \in L[q_0]\cap\Sigma} y_x \big )$.
Then the optimization problem can be written as
\[
\max_{\yvec \in \Omega} u(\yvec) - v(\yvec).
\]
The {\em DC programming solution} for such a problem uses an iterative
procedure that linearizes the subtrahend in the concave difference
about the current estimate and then solves the resulting concave
objective for the next estimate
\cite{horst1999} i.e.,
\[
\yvec^{n+1} = \argmax_{\yvec \in \Omega} u(\yvec) - \yvec \cdot \triangledown v(\yvec^{n}).
\]
Substituting $u$ and $\triangledown v$ gives
\begin{align}
\yvec^{n+1} 
 = & \argmax_{\yvec \in \Omega} 
\sum_{x \in L[q]} \bigg \{
C(x,q) \log  y_x + 
y_x f(x, q, \yvec^n) \bigg \},
\label{eq:objective_y}
\end{align}
where
\begin{equation}
f(x, q, \yvec^n) = \sum_{q_0 \in B_1(q)} 
\frac {  C(\varphi, q_0) \indic_{x \in L[q_0]\cap\Sigma}}
{1 - \sum_{x' \in L[q_0]\cap\Sigma} y_{x'}^n}.
\label{eq:dcfactor}
\end{equation}
Observe that ${1 - \sum_{x' \in L[q_0]\cap\Sigma} y_{x'}^n}
\geq \epsilon$ as the automaton is backoff-complete and $\yvec^n \in \Omega$.

Let $C(q)$ be defined as:
\[
C(q) = \sum_{x' \in L[q]} C(x',q).
\]

The following lemma provides the solution to the optimization problem
in \eqref{eq:objective_y} which leads to a stationary point of the objective.

\begin{lemma}
Solution to \eqref{eq:objective_y} is given by
\begin{equation}
\label{eq:iteration_y}
y^{n+1}_x = \max \left(\frac{C(x,q)}{\lambda - f(x,q,\yvec^n)}, \epsilon \right),
\end{equation}
where $\lambda \in \left[ \max_{x\in L[q]} f(x, q, \yvec^n) + C(x,
q), \max_{x\in L[q]} f(x, q, \yvec^n) + \frac{C(q)}{1-|L[q]|\epsilon} \right]$ such that
$\sum_{x} y^{n+1}_x = 1$.
\end{lemma}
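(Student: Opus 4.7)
\bigskip

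\noindent\textbf{Proof proposal.} The plan is to treat the inner maximization as a constrained concave optimization and apply the KKT conditions. The objective in \eqref{eq:objective_y} is concave in $\yvec$ (the log term is concave since $C(x,q)\geq 0$, the term $y_x\, f(x,q,\yvec^n)$ is linear, and $f(x,q,\yvec^n)\geq 0$ is fixed at this iteration), and the feasible set $\{y_x\geq \epsilon,\ \sum_{x\in L[q]} y_x =1\}$ is convex, so KKT conditions are both necessary and sufficient. I would form the Lagrangian
\[
\mathcal{L}(\yvec,\lambda,\boldsymbol{\mu})
= \sum_{x\in L[q]} \bigl[C(x,q)\log y_x + y_x\, f(x,q,\yvec^n)\bigr]
- \lambda\!\left(\sum_{x\in L[q]} y_x - 1\right)
+ \sum_{x\in L[q]} \mu_x (y_x - \epsilon),
\]
with $\mu_x\geq 0$ and complementary slackness $\mu_x(y_x-\epsilon)=0$.

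The first-order stationarity conditions give $C(x,q)/y_x + f(x,q,\yvec^n) - \lambda + \mu_x = 0$. I would then split into two cases. If the lower bound is inactive ($\mu_x=0$), this yields $y_x = C(x,q)/(\lambda - f(x,q,\yvec^n))$; if $y_x=\epsilon$, then $\mu_x = \lambda - f(x,q,\yvec^n) - C(x,q)/\epsilon \geq 0$, i.e.\ $C(x,q)/(\lambda-f(x,q,\yvec^n))\leq \epsilon$. Combining both cases, the KKT-stationary $y_x$ equals $\max\bigl(C(x,q)/(\lambda - f(x,q,\yvec^n)),\, \epsilon\bigr)$, matching \eqref{eq:iteration_y}. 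It remains to choose $\lambda$ so that $\sum_{x\in L[q]} y_x^{n+1}=1$.

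For the existence of such a $\lambda$ I would argue via monotonicity and the intermediate value theorem. Define $G(\lambda) \triangleq \sum_{x\in L[q]}\max\!\bigl(C(x,q)/(\lambda - f(x,q,\yvec^n)),\,\epsilon\bigr)$, which is continuous and non-increasing in $\lambda$ on the range where every denominator is positive. At the lower endpoint $\lambda_{\min} = \max_{x\in L[q]}\bigl[f(x,q,\yvec^n)+C(x,q)\bigr]$, pick $x^\star$ attaining the max: then $\lambda_{\min}-f(x^\star,q,\yvec^n)=C(x^\star,q)$, so $y_{x^\star}=1$, giving $G(\lambda_{\min})\geq 1$. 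At the upper endpoint $\lambda_{\max} = \max_{x\in L[q]} f(x,q,\yvec^n)+C(q)/(1-|L[q]|\epsilon)$, we have $\lambda_{\max}-f(x,q,\yvec^n)\geq C(q)/(1-|L[q]|\epsilon)$ for every $x$, hence
\[
G(\lambda_{\max}) \leq \sum_{x\in L[q]} \left[\frac{C(x,q)(1-|L[q]|\epsilon)}{C(q)} + \epsilon\right] = (1-|L[q]|\epsilon) + |L[q]|\epsilon = 1,
\]
using $\max(a,b)\leq a+b$ for $a,b\geq 0$. By continuity there exists $\lambda\in[\lambda_{\min},\lambda_{\max}]$ with $G(\lambda)=1$, and the corresponding $\yvec^{n+1}$ from \eqref{eq:iteration_y} satisfies all KKT conditions and is therefore the unique maximizer.

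The main obstacle is the verification of the upper bound: the bound is not the naive $\max_x[f(x,q,\yvec^n)+C(x,q)/\epsilon]$ one gets from forcing every coordinate to $\epsilon$, but the tighter form involving $C(q)/(1-|L[q]|\epsilon)$. The key trick is the elementary estimate $\max(a,b)\leq a+b$, which lets the two pieces of the max telescope into $(1-|L[q]|\epsilon)+|L[q]|\epsilon=1$; beyond that, the proof is a routine KKT + IVT argument.
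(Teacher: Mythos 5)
Your proof is correct and takes essentially the same approach as the paper's: KKT stationarity plus complementary slackness to obtain the $\max(\cdot,\epsilon)$ form, followed by pinning $\lambda$ to the stated interval so that $\sum_x y^{n+1}_x = 1$. Your explicit intermediate-value argument, with the $\max(a,b)\le a+b$ estimate at the upper endpoint, is in fact slightly more careful than the paper's, which only asserts the endpoint behavior (and organizes the KKT case split around $C(x,q)=0$ versus $C(x,q)\neq 0$ rather than around whether the constraint $y_x\ge\epsilon$ is active).
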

\begin{proof}
With KKT multipliers, the optimization problem can be written as
\[
\max_{\yvec, \lambda, \mu_x : \mu_x \leq 0}
\sum_{x \in L[q]} \bigg \{
C(x,q) \log y_x + y_x f(x, q, \yvec^n) \bigg \} + \lambda \bigl(1 - \sum_{x \in L[q]}
y_x \bigr) + \sum_{x \in L[q]} \mu_x ( \epsilon - y_x).
\]
We divide the proof into two cases depending on the value of
$C(x,q)$. Let $C(x,q) \neq 0$.  Differentiating the above equation with respect to $y_x$ and equating to zero, we
get
\[
y^{n+1}_x = \frac{C(x,q)}{\lambda +\mu_x - f(x, q, \yvec^n)}.
\]
Furthermore, by the KKT condition, $\mu_x (\epsilon - y^{n+1}_x) = 0$.
Hence, $\mu_x$ is only non-zero if $y^{n+1}_x
= \epsilon$ and if $\mu_x$ is zero, then $y^{n+1}_x
= \frac{C(x,q)}{\lambda  - f(x, q, \yvec^n)}$.
Furthermore, since for all $x$, $\mu_x \leq 0$, for
$y^{n+1}_x$ to be positive, we need $\lambda \geq \max_x f(x,
q, \yvec^n)$. Hence, the above two conditions can be re-expressed
as \eqref{eq:iteration_y}.  If $C(x,q) = 0$, then we get
\[
f(x, q, \yvec^n) =  \lambda + \mu_x \text{ and } \mu_x(\epsilon - y^{n+1}_x) = 0,
\]
and the solution is given by $y^{n+1}_x = \epsilon$ and $\mu_x =
f(x,q, \yvec^n) - \lambda$. Since $\mu_x$ cannot be positive, we have
$f(x,q, \yvec^n) \leq \lambda$ for all $x$. Hence, irrespective of the value of
$C(x,q)$, the solution is given by \eqref{eq:iteration_y}.

The above analysis restricts $\lambda \geq \max_{x} f(x,
q,\yvec^n)$. If $\lambda < f(x, q, \yvec^n) + C(x, q)$, then $y^{n+1}_x >
1$ and if $\lambda > \max_{x} f(x, q, \yvec^n) + \frac{C(q)}{1-|L[q]|\epsilon}$, then $\sum_x y^{n+1}_x <
1$. Hence $\lambda$ needs to lie in
\[
\left[ \max_{x\in L[q]} f(x, q, \yvec^n) + C(x, q),   \max_{x\in L[q]} f(x, q, \yvec^n) + \frac{C(q)}{1-|L[q]|\epsilon} \right]
\]
to ensure that $\sum_{x} y^{n+1}_x = 1$.
\end{proof}
From this, we form the \textsc{KL-Minimization} algorithm in Figure \ref{fig:klminalg}. Observe that if
all the counts are zero, then for any $\yvec$, $u(\yvec) - v(\yvec) =
0$ and any solution is an optimal solution and the algorithm returns a
uniform distribution over labels. In other cases, we initialize the
model based on counts such that $\yvec^0 \in \Omega$. We then repeat the
DC programming algorithm iteratively until convergence. Since,
$\Omega$ is a convex compact set and functions $u$, $v$, and
$\triangledown v$ are continuous and differentiable in $\Omega$,
the \textsc{KL-Minimization} converges to a stationary point \cite[Theorem $4$]{sriperumbudur2009convergence}.
For each state $q$, the computational complexity of \textsc{KL-Minimization} is in $O(|E[q]|)$
per iteration. Hence, if the maximum number of iterations per each state is $s$,
the overall computational complexity of \textsc{KL-Minimization} is in $O(s |E|)$.

\begin{figure*}[t]
\begin{center}
\fbox{\begin{minipage}{1.0\textwidth}
\begin{center}
Algorithm \textsc{KL-Minimization}
\end{center}
\noindent\textbf{Notation:}\\
\begin{tabular}{ll@{\hspace*{0.5in}}ll}
$\bullet$ &  $y_x = p_a(x| q)$ for $x \in L(q)$ & $\bullet$ &  $k = |L[q]|$
\\
$\bullet$ &  $C(x,q)$ from Equations~\ref{eq:cfactor1} and \ref{eq:cfactor2} & $\bullet$ &  $\textrm{lb} =  \max_{x\in L[q]} f(x, q, \yvec^n) + C(x, q)$
\\
$\bullet$ &  $C(q) = \sum_{x' \in L[q]} C(x',q)$ & $\bullet$ &  $\textrm{ub} =  \max_{x\in L[q]} f(x, q, \yvec^n) + \frac{C(q)}{1-k\epsilon}$
\\
$\bullet$ &  $f(x,q, \yvec^n)$ from Equation~\ref{eq:dcfactor} &
$\bullet$ &  $\epsilon = $ lower bound on $y_x$\\~
\vspace{-0.2cm}
\end{tabular}

\noindent\textbf{Trivial case}: If $C(q)= 0$, output $\yvec$ given by $y_x = 1/k$ for all $x$.

\noindent\textbf{Initialization}: 
Initialize:
\[
y^0_x = \frac{C(x,q)}{C(q)} \left( 1 - k \epsilon \right) + \epsilon.
\]

\noindent\textbf{Iteration}: Until convergence do:
\[
 y^{n+1}_x = \max \left( \frac{C(x,q)}{\lambda - f(x,q,\yvec^n)}, \epsilon \right),
\]
where $\lambda \in \left[\textrm{lb},\textrm{ub} \right]$ is chosen
(in a binary search) to ensure $\sum_{x \in L(q)} y_x = 1$.
\end{minipage}}
\end{center}
\caption{\textsc{KL-Minimization} Algorithm.}\label{fig:klminalg}
\end{figure*}

\subsection{Discussion}
\label{sec:prev_work_discussion}
Our method for approximating from source $\varphi$-WFAs, specifically from
backoff $k$-gram models, shares some similarities
with entropy pruning \cite{stolcke2000entropy}. Both can be used to approximate
onto a more compact $k$-gram topology and both use
a KL-divergence criterion to do so. They differ in that entropy pruning, 
by sequentially removing transitions, selects the target topology and in doing
so uses a greedy rather than global entropy reduction strategy. We will
empirically compare these methods in Section~\ref{sec:validity} for this
use case.

Perhaps the closest to our work on approximating arbitrary sources is that of
\citet{deoras2011variational}, where they used an RNN LM to generate samples
that they then used to train a $k$-gram LM.
In contrast, we use samples to approximate the joint state probability
$\gamma(q_s, q_a)$.
If $\ell$ is the average number of words per sentence and $N$ is the
total number of sampled sentences, then the time complexity of
\citet{deoras2011variational} is $\cO(N \ell |\Sigma|)$ as it takes
$\O(|\Sigma|)$ time to generate a sample from the neural model for every word.
This is the same as that of our algorithm in Section~\ref{sec:arbitrary_counting}.
However, our approach has several advantages. Firstly, for a given topology,
our algorithm provably finds the best KL minimized solution, whereas their
approach is optimal only when the size of the $k$-gram model tends to infinity.
Secondly, as we show in our experiments, the proposed algorithm performs better
compared to that of \citet{deoras2011variational}.

\citet{arisoy2014converting} and \citet{adel2014comparing} both
trained DNN models of different orders as the means to derive
probabilities for a $k$-gram model, rather than focusing on model
approximation as the above mentioned methods do.  Further, the methods
are applicable to $k$-gram models specifically, not the larger class of
target WFA to which our methods apply.

\section{Experiments}
\label{sec:expt}
We now provide experimental evidence of the theory's validity and show its usefulness in
various applications. For the ease of notation,
we use \Wapprox~to denote the exact counting algorithm described in
Section \ref{sec:exact_counting} followed by the \textsc{KL-Minimization} algorithm
of Section \ref{sec:kl}.  Similarly,
we use \Wsampleapprox{N} to denote the sampled counting described in
Section \ref{sec:arbitrary_counting} with $N$ sampled sentences
followed by \textsc{KL-Minimization}.

We first give experimental evidence that supports the theory
in Section~\ref{sec:validity}. We then show how to approximate
neural models as WFAs in Section~\ref{sec:neural}. We also use the proposed method to
provide lower bounds on the perplexity given a target topology
in Section~\ref{sec:lower}.  Finally, we present experiments in
Section~\ref{sec:expWFAsource} addressing
scenarios with source WFAs, hence not requiring sampling. First, motivated by
low-memory applications such as (virtual) keyboard decoding~\cite{ouyang2017mobile}, we
use our approach to create compact language models in
Section~\ref{sec:small}. Then, in Section~\ref{sec:open}, we use our approach to create compact
open-vocabulary character language models from count-thresholded $n$-grams.

For most experiments (unless otherwise noted) we use the 1996 CSR Hub4 Language Model data,
LDC98T31 (English Broadcast News data). We use the processed form of the
corpus and further process it to downcase all the words and remove
punctuation. The resulting dataset has $132$M words in the
training set, $20$M words in the test set, and has $240$K unique
words. For all the experiments that use word models, we create a
vocabulary of approximately $32$K words consisting of all words
that appeared more than $50$ times in the training corpus. Using this
vocabulary, we create a trigram Katz model and prune it to contain
$2$M $n$-grams using entropy pruning \cite{stolcke2000entropy}. We use this
pruned model
as a baseline in all our word-based experiments. We use Katz smoothing
since it is amenable to pruning \cite{chelba2010study}. The perplexity
of this model on the test set is $144.4$.\footnote{For all perplexity
measurements we treat the unknown word as a single token instead of a
class.  To compute the perplexity with the unknown token being treated
as class, multiply the perplexity by $k^{0.0115}$, where $k$ is the
number of tokens in the unknown class and $0.0115$ is the out of
vocabulary rate in the test dataset.} All algorithms were implemented
using the open-source {\tt OpenFst} and {\tt OpenGrm} $n$-gram and stochastic
automata ({\tt SFst}) libraries\footnote{These libraries
are available at {\tt www.openfst.org} and {\tt www.opengrm.org}}
with the last library including these implementations
\cite{allauzen2007,roark2012,allauzen2018}.

\subsection{Empirical evidence of theory}
\label{sec:validity}
Recall that our goal is to find the distribution on a target DFA
topology that minimizes the KL divergence to the source
distribution. However, as stated in Section~\ref{sec:kl}, if the
target topology has failure transitions, the optimization objective is not
convex so the stationary point solution may not be
the global optimum. We now show that the model indeed converges
to a good solution in various cases empirically.

\paragraph{Idempotency}

When the target topology is the same as the source topology, we
show that the performance of the approximated model matches the
source model.   Let $p_s$ be the pruned Katz word model described
above. We  approximate $p_s$ onto the same topology using \Wapprox~and
\Wsampleapprox{$\cdot$} and then compute  perplexity on  the test
corpus.   The results are presented in  Figure~\ref{fig:idem}. The test
perplexity of the \Wapprox~model matches that of the source model
and the  performance of the  \Wsampleapprox{N} model approaches that of the
source model as the number of samples $N$ increases.
\begin{figure}[t]
\vspace{-3ex}
\begin{center}
\includegraphics[scale=.4]{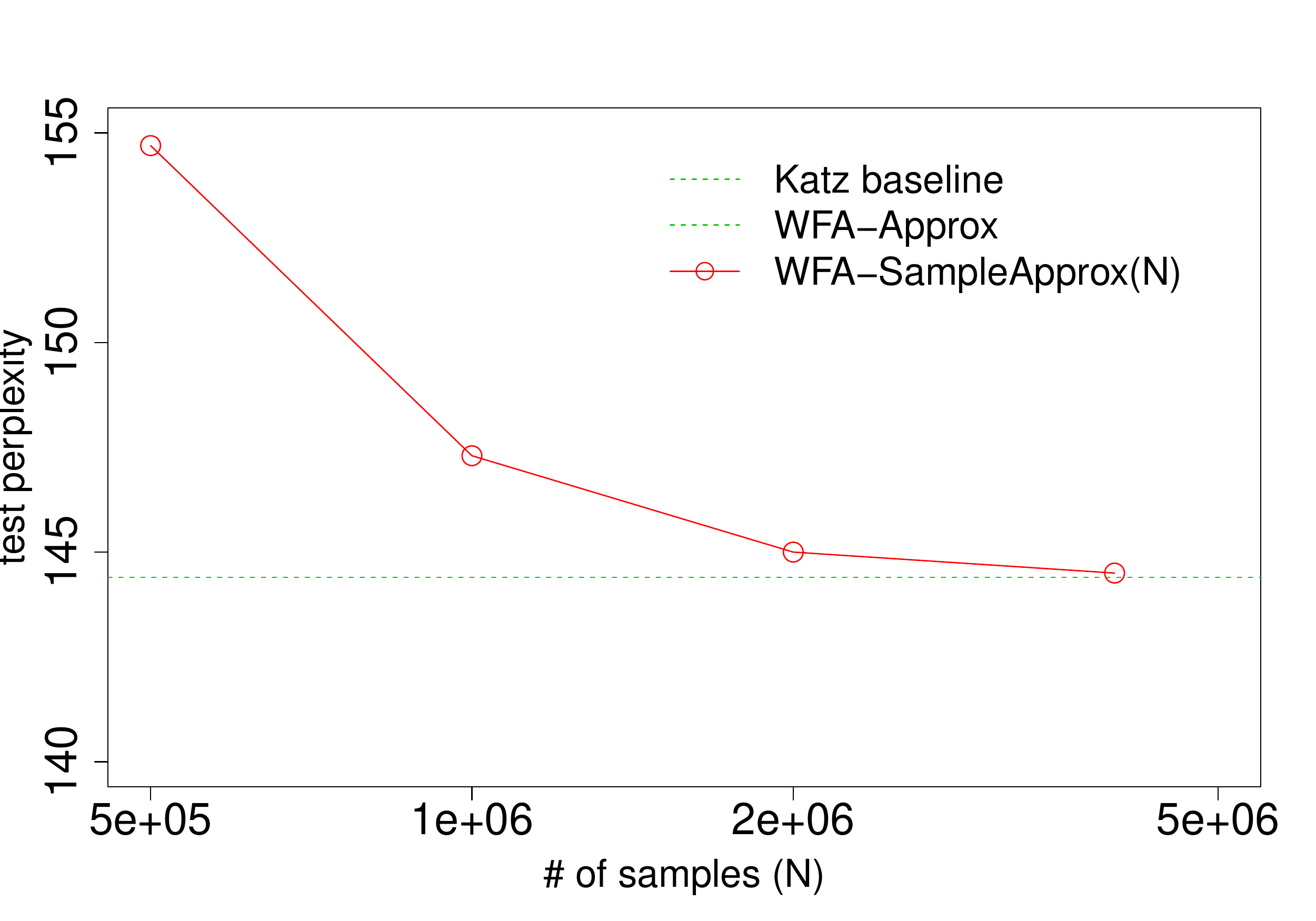}
\end{center}
\caption{{\em Idempotency:} Test set perplexity for Katz baseline and
approximations of that baseline
trained on the same data.  The Katz baseline and Katz
\Wapprox~plots are identical (and thus overlap), while \Wsampleapprox{N} plot converges to
the baseline.}
\label{fig:idem}
\end{figure}

\paragraph{Comparison to greedy pruning}

\begin{table}[t]
  \centering
  \caption{Test perplexity of greedy pruning and approximated models.}
\vskip .15in
  \begin{tabular}{c c c}
  Model size & Greedy pruning  & Approximated model  \\ \hline
  250K & $205.7$ & $198.3$\\
   500K & $177.3$& $173.0$ \\
  1M & $157.4$ & $155.7$\\
   1.5M & $149.0$ & $148.4$ \\
  \end{tabular}
\label{tab:greedy}
\end{table}

Recall that entropy pruning \cite{stolcke2000entropy} greedily removes
$n$-grams such that the KL divergence to the original model $p_s$ is
small. Let $p_{\text{greedy}}$ be the resulting model and
$A_\text{greedy}$ be the topology of $p_{\text{greedy}}$. If
the \textsc{KL-Minimization} converges to a good solution, then
approximating $p_s$ onto $A_\text{greedy}$ would give a model that is at
least as good as $p_{\text{greedy}}$. We show that this is indeed the
case; in fact, approximating $p_s$ onto $A_{\text{greedy}}$ performs
better than $p_{\text{greedy}}$. As before, let $p_s$ again be the $2M$
$n$-gram Katz model described above.  We
prune it to have $1$M $n$-grams and obtain $p_{\text{greedy}}$, which
has a test perplexity of $157.4$. We then approximate the source model $p_s$ on
$A_\text{greedy}$, the topology of $p_{\text{greedy}}$.
The resulting model has a test perplexity of
$155.7$, which is smaller than the test perplexity of
$p_{\text{greedy}}$. This shows that the approximation algorithm
indeed finds a good solution. We repeat the experiment with
different pruning thresholds and observe that the approximation algorithm
provides a good solution across the resulting model sizes. The results are in
Table~\ref{tab:greedy}.

\subsection{Neural models to WFA conversion}
\label{sec:neural}

Since neural models such as LSTMs give improved performance over $n$-gram
models, we investigate whether an LSTM distilled onto a
WFA model can obtain better performance than the baseline WFA trained
directly from Katz smoothing. As stated in the introduction, this could then
be used together with federated learning for fast and private on-device
inference, which was done in \citet{chen2019federated} using the
methods presented here.

To explore this, we train an LSTM language model on the training
data. The model has $2$ LSTM layers with $1024$ states and an embedding
size of $1024$. The resulting model has a test perplexity of $60.5$.
We approximate this model as a WFA in three ways.

\begin{figure}[t]
\vspace{-3ex}
\begin{center}
\includegraphics[scale=.4]{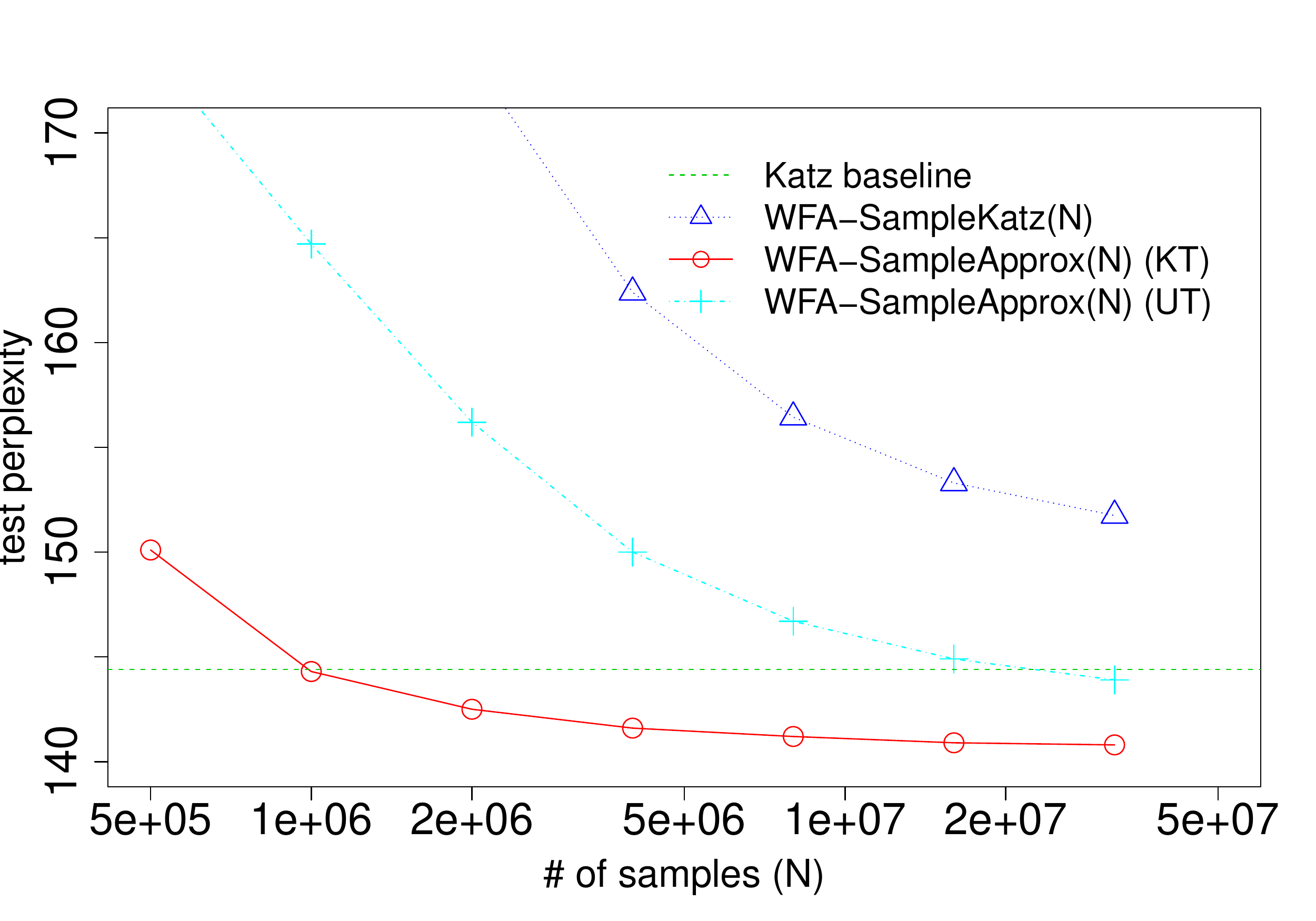}
\end{center}
\caption{{\em Approximated Neural Models for the English Broadcast News corpus:} Test set perplexity for Katz baseline and
LSTM models approximated in three ways. One uses LSTM samples to build a new Katz
model (\Wsamplekatz{N}). The remaining two use our approximation algorithm (\Wsampleapprox{N}), but
with different topologies. One topology ({\tt KT}) is known, using the baseline Katz topology,
and one ({\tt UT}) is unknown, using the samples drawn for \Wsampleapprox{N}
to also infer the topology.}
\label{fig:nn}
\end{figure}

The first way is to construct a Katz $n$-gram model on
$N$ LSTM samples and
entropy-prune to $2$M $n$-grams, which
we denote by \Wsamplekatz{N}. This approach is very similar to that of
~\cite{deoras2011variational}, except that we use Katz smoothing instead of
Kneser-Ney smoothing. We used Katz due to the fact that, as stated
earlier, Kneser-Ney models are not
amenable to pruning \cite{chelba2010study}. The second way is
to approximate onto the baseline Katz $2$M $n$-gram topology
described above using \Wsampleapprox{N}.  We refer to this experiment as
\Wsampleapprox{N} (KT), where KT stands for known topology.
The results are shown in
Figure \ref{fig:nn}. The \Wsamplekatz{N} models
perform significantly worse than the baseline Katz model even at
$32$M samples,
while the \Wsampleapprox{N} models
have better perplexity than the baseline Katz model with as little
as $1$M samples. With $32$M samples this way of approximating the
LSTM model as a WFA is $3.6$
better in perplexity than the baseline Katz.

Finally, if the WFA topology is
unknown we use the samples obtained in \Wsampleapprox{$\cdot$}
to create a Katz model entropy-pruned to $2$M $n$-grams.
We refer to this experiment as
\Wsampleapprox{N} (UT), where UT stands for unknown topology.
The results are shown in Figure~\ref{fig:nn}.
The approximated models obtained by \Wsampleapprox{N} (UT) perform better than
\Wsamplekatz{N}. However, they do not perform as well as \Wsampleapprox{N} (KT), the models
obtained with the known topology derived from the training data. However, with enough samples, their
performance is similar to that of the original Katz model.

We then compare our approximation method with other methods that approximate DNNs with $n$-gram LMs~\citep{arisoy2014converting, adel2014comparing}. Both these
methods require training DNNs of different orders and we used DNNs with
two layers with $1024$ nodes and an embedding size of $1024$. The results are in Table~\ref{tab:bnews_compare}.
The proposed algorithms \Wsampleapprox{$\cdot$} perform better than the existing approaches.

\begin{table}[t]
  \centering
  \caption{Test perplexity of $k$-gram models obtained by different approximation methods for the English Broadcast News corpus.
  We use 32M samples for both \Wsamplekatz{$\cdot$} and \Wsampleapprox{$\cdot$}.}
\vskip .15in
  \begin{tabular}{c c }
  Model & Test perplexity \\ \hline
  Katz & $144.4$ \\
  \Wsamplekatz{$\cdot$} &  $151.8$ \\
  \citet{arisoy2014converting} & $159.6$ \\
  \citet{adel2014comparing} & $146.7$  \\
  \Wsampleapprox{$\cdot$} (KT) &  $140.8$  \\
  \Wsampleapprox{$\cdot$} (UT) & $143.9$ \\
  \end{tabular}
\label{tab:bnews_compare}
\end{table}

\begin{table}[t]
  \centering
  \caption{Test perplexity of $k$-gram models obtained by different
  approaches for the Polish (Europarl) corpus.
  We use 32M samples for both \Wsamplekatz{$\cdot$} and \Wsampleapprox{$\cdot$}.}
\vskip .15in
  \begin{tabular}{c c }
  Model & Test perplexity \\ \hline
  Katz & $185.5$ \\
  \Wsamplekatz{$\cdot$} & $189.3$ \\
  \citet{arisoy2014converting} & $197.8$ \\
  \citet{adel2014comparing} & $187.1$  \\
  \Wsampleapprox{$\cdot$} (KT) &  $181.4$  \\
  \Wsampleapprox{$\cdot$} (UT) & $177.0$ \\
  \end{tabular}
\label{tab:europarl_compare}
\end{table}

\paragraph{Experiment on Polish}
To repeat the above experiment on a different language and corpus,
we turn to the Polish language
section\footnote{https://www.statmt.org/europarl/v7/pl-en.tgz} of the
Europarl corpus~\cite{koehn2005europarl}.
We chose Polish for this follow-up experiment due to the fact that it
belongs to a different language family than English, is relatively
highly inflected (unlike English) and is found in a publicly available corpus.
We use the processed form of the corpus and
further process it to downcase all the words and remove punctuation.
The resulting dataset has approximately $13$M words in the training
set and 210K words in the test set.
The selected vocabulary has
approximately 30K words, consisting of all words that appeared more
than $20$ times in the training corpus. Using this vocabulary,
we create a trigram Katz model and prune it to contain $2$M
$n$-grams using entropy pruning \cite{stolcke2000entropy}.
We use this pruned model as a baseline.
The results are in Figure~\ref{fig:euro_nn}. The trend is similar to that of
the English Broadcast News corpus with the proposed algorithm \Wsampleapprox{$\cdot$} performing
better than the other methods. We also compare the proposed algorithms with
other neural approximation algorithms. The comparison results are shown in Table~\ref{tab:europarl_compare}.

\begin{figure}[t]
\vspace{-3ex}
\begin{center}
\includegraphics[scale=.4]{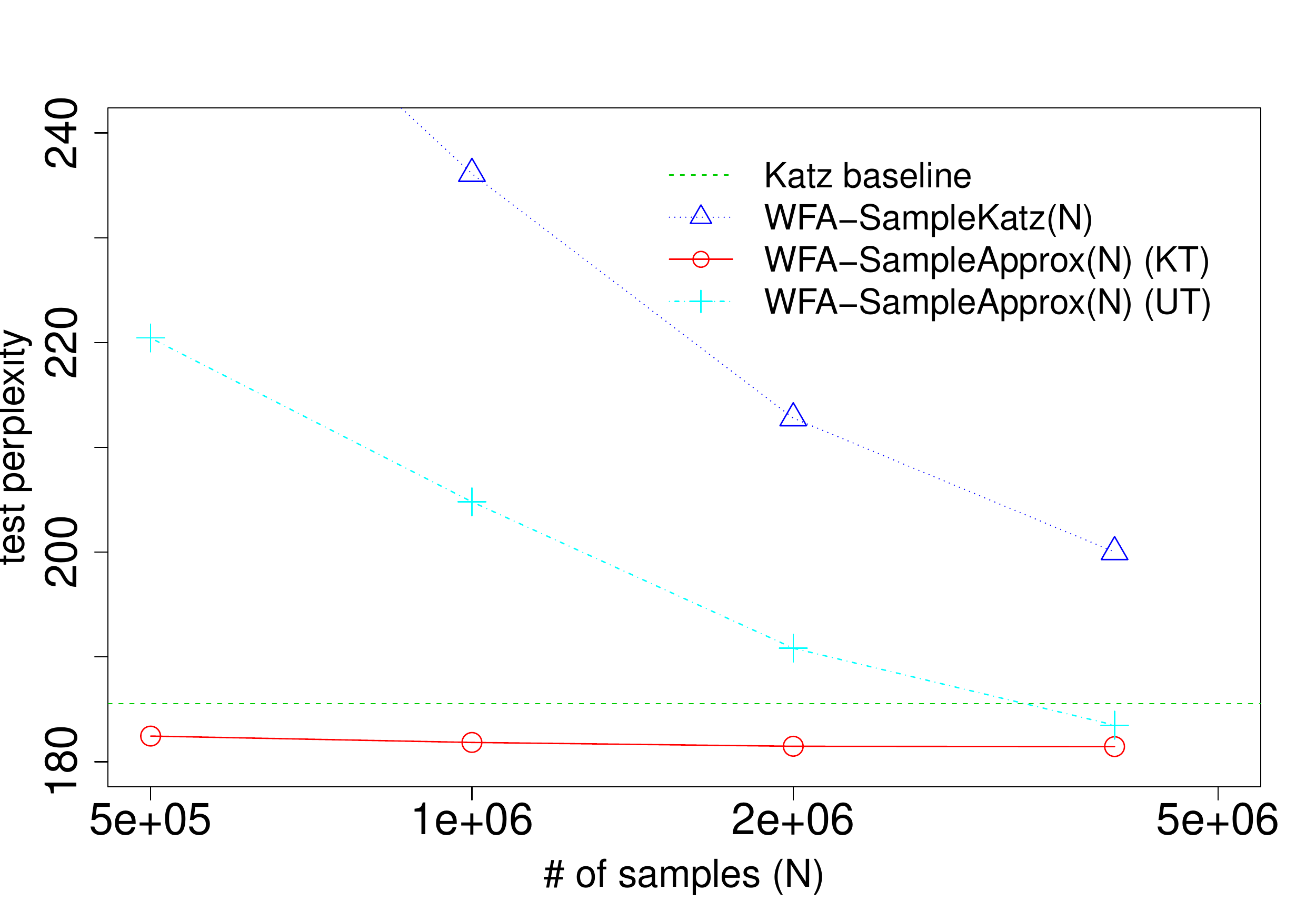}
\end{center}
\caption{{\em Approximated Neural Models for the Polish (Europarl) corpus:} Test set perplexity for Katz baseline and
LSTM models approximated in three ways. One uses LSTM samples to build a new Katz
model (\Wsamplekatz{N}). The remaining two use our approximation algorithm (\Wsampleapprox{N}), but
with different topologies. One topology ({\tt KT}) is known, using the baseline Katz topology,
and one ({\tt UT}) is unknown, using the samples drawn for \Wsampleapprox{N}
to also infer the topology.}
\label{fig:euro_nn}
\end{figure}

\subsection{Lower bounds on perplexity}
\label{sec:lower}

\paragraph{Best approximation for target topology}

The neural model in Section~\ref{sec:neural} has a perplexity of
$60.5$, but the best perplexity for the approximated model is $140.8$.
Is there a better approximation algorithm for the given target topology? We place
bounds on that next.

Let $T$ be the set of test
sentences.  The test-set log-perplexity of a model $p$ can be
written as
\[
\frac{1}{|T|} \sum_{x^* \in T} \log \frac{1}{p(x^*)} = \sum_{x^*} \hat{p}_t (x^*) \log \frac{1}{p(x^*)},
\]
where $\hat{p}_t$ is the empirical distribution of test
sentences. Observe that the best model with topology $A$ can
be computed as
\[
p'_a = \argmin_{p_a \in \sP(A)}  \sum_{x^*} \hat{p}_t (x^*) \log \frac{1}{p_a(x^*)},
\]
which is the model with topology $A$ that has minimal KL
divergence from the test distribution $\hat{p}_t$. This can be computed
using \Wapprox.  If we use this approach on the English Broadcast News test set
with the $2$M $n$-gram Katz model, the resulting model has perplexity of
$121.1$, showing that, under the assumption that the algorithm finds the global
KL divergence minimum, the test perplexity with this topology cannot be improved
beyond $121.1$, irrespective of the method.

\paragraph{Approximation onto best trigram topology}

What if we approximate the LSTM onto
the best trigram topology, how well does it perform over the test
data? To test this, we build a trigram model from the test data and
approximate the LSTM on the trigram topology. This approximated model has $11$M
$n$-grams and a perplexity of $81$. This shows that for large
datasets, the largest shortfall of $n$-gram models in the approximation
is due to the $n$-gram topology.

\subsection{WFA sources}
\label{sec:expWFAsource}
While distillation of neural models is an important use case for our
algorithms, there are other scenarios where approximations will be
derived from WFA sources.  In such cases, no sampling is required and
we can use the algorithms presented in Sections~\ref{sec:wfa_source}
or \ref{sec:exact_counting}, i.e., \Wapprox.  For example, it is not uncommon for
applications to place maximum size restrictions on models, so that
existing WFA models are too large and must be reduced in size prior to
use.  Section~\ref{sec:small} presents a couple of experiments focused
on character language model size reduction that were motivated by such
size restrictions.  Another scenario with WFA sources is when, for
privacy reasons, no language model training corpus is available in a
domain, but only minimum-count-thresholded (i.e., high frequency) word
$n$-gram counts are provided.  In Section~\ref{sec:open} we examine
the estimation of open-vocabulary character language models from such
data.

\subsubsection{Creating compact language models}
\label{sec:small}

\paragraph{Creating compact models for infrequent words}

In low-memory applications such as on-device keyboard
decoding~\cite{ouyang2017mobile}, it is often useful to have a character-level
WFA representation of a large set of vocabulary words that act only as unigrams,
e.g. those words beyond the $32$K words of our trigram model. We explore
how to compactly represent such a unigram-only model.

To demonstrate our approach, we take all the words in the training set
(without a count cut-off) and build a character-level deterministic WFA of those
words weighted by their unigram probabilities. This is
represented as a tree rooted at the initial state (a {\em trie}). This automaton has
$820$K transitions. Storing this many transitions can be prohibitive; we can reduce the
size in two steps.

The first step is to minimize this WFA using
weighted minimization \cite{mohri2000minimization}
to produce $p_{\text{char}}$, which has a topology $A_{\text{char}}$.
Although $p_{\text{char}}$ is already much smaller
(it has $378$K transitions, a $54\%$ reduction), we can go further by
approximating onto the minimal deterministic {\em unweighted} automaton,
$\text{Minimize}(A_{\text{char}})$. This gives us a model
with only $283$K transitions, a further $25\%$ reduction. Since
$\text{Minimize}(A_{\text{char}})$ accepts exactly the same words as
$A_{\text{char}}$, we are not corrupting our model by adding or removing any
vocabulary items. Instead we find an estimate which is as close as
possible to the original, but which is constrained to the minimal deterministic
representation that preserves the vocabulary.

To evaluate this approach, we randomly select a $20$K sentence
subset of the original test set, and represent each selected string
as a character-level sequence. We evaluate using cross entropy in bits-per-character,
common for character-level models. The resulting cross entropy for
$p_{\text{char}}$ is $1.557$ bits-per-character. By comparison,
the cross entropy for $p_{\text{char}}$ approximated onto
$\text{Minimize}(A_{\text{char}})$ is $1.560$ bits-per-character.
In exchange for this small
accuracy loss we are rewarded with a model which is $25\%$ smaller.
\citet{wolfsonkin2019latin} used the methods presented here to augment
the vocabulary of an on-device keyboard to deal with issues related to
a lack of standard orthography.

\paragraph{Creating compact WFA language models}

Motivated by the previous experiment, we also consider applying
(unweighted) minimization to $A_{\text{greedy}}$, the word-based
trigram topology that we pruned to $1$M $n$-grams described earlier.
In Table~\ref{tab:wordmin} we show that applying minimization to
$A_{\text{greedy}}$ and then approximating onto the resulting topology
leads to a reduction of $7\%$ in the number of transitions needed to
represent the model. However, the test perplexity also increases some.
To control for this, we prune the original model to a $1.08$M $n$-gram
topology $A'_\text{greedy}$ instead of the $1$M as before and apply
the same procedure to obtain an approximation on
$\text{Minimize}(A'_\text{greedy}$).  We achieve a $0.4\%$ perplexity
reduction compared to the approximation on $A_\text{greedy}$ with very
nearly the same number of transitions.

\begin{table}[t]
  \centering
  \caption{Test perplexity of models when approximated onto smaller topologies.}
\vskip .15in
  \begin{tabular}{c c c }
  Topology                              & Test perplexity   & \# Transitions \\ \hline
  $A_\text{greedy}$                  & $155.7$ & $1.13\text{M}$  \\
  $\text{Minimize}(A_\text{greedy}$) & $156.4$ & $1.05\text{M}$ \\
  \hline
  $A'_\text{greedy}$                  & $154.1$ & $1.22\text{M}$ \\
  $\text{Minimize}(A'_\text{greedy}$) & $154.9$ & $1.13\text{M}$ \\
  \end{tabular}
\label{tab:wordmin}
\end{table}

\subsubsection{Count thresholded data for privacy}
\label{sec:open}

One increasingly common scenario that can benefit from these
algorithms is modeling from frequency thresholded substring counts
rather than raw text.  For example, word $n$-grams
and their frequencies may be provided from certain domains of interest
only when they occur within at least $k$ separate documents.  With a
sufficiently large $k$ (say 50), no $n$-gram can be traced to a
specific document, thus providing privacy in the aggregation.  This is
known as $k$-anonymity \cite{samarati2001protecting}.

However, for any given domain, there are many kinds of models that one
may want to build depending on the task, some of which may be
trickier to estimate from such a collection of word $n$-gram counts
than with standard approaches for estimation from a given corpus.
For example, character $n$-gram models can be of high utility for tasks
like language identification, and have the benefit of a relatively
small memory footprint and low latency in use. However, character $n$-gram
models might be harder to learn from a $k$-anonymized corpus.

Here we will compare open-vocabulary character language models, which
accept all strings in $\Sigma^*$ for a character vocabulary $\Sigma$,
trained in several ways.  Each approach relies on the training corpus
and 32k vocabulary, with every out-of-vocabulary word replaced by a
single OOV symbol $\bigstar$.  Additionally, for each approach we add
50 to the unigram character count of any printable ASCII character, so
that even those that are unobserved in the words of our 32k vocabulary
have some observations.  Our three approaches are:

\begin{enumerate}
\item {\bf Baseline corpus trained models:} We count
character 5-grams from the $k$-anonymized corpus, then remove all $n$-grams that include
the $\bigstar$ symbol (in any position) prior to smoothing and
normalization. Here we present both Kneser-Ney and Witten Bell
smoothed models, as both are popular for character $n$-gram models.
\item {\bf Word trigram sampled model:} First we count word trigrams in the $k$-anonymized corpus
and discard any $n$-gram with the $\bigstar$ symbol (in any position) prior to
smoothing and normalization.  We then sample one million strings from a Katz smoothed
model and build a character 5-gram model from these strings.  We also use
this as our target topology for the next approach.
\item {\bf Word trigram KL minimization estimation:} We create a
source model by converting the 2M $n$-gram word trigram model into an
open vocabulary model. We do this using a specialized construction related to the
construction presented in Section 6 of \citet{chen2019federated}, briefly
described below, that converts the word model into a character
sequence model.  As this model is still closed vocabulary (see below),
we additionally smooth the unigram distribution with a character
trigram model trained from the words in the symbol table (and
including the 50 extra counts for every printable ASCII character as
with the other methods). From this source model, we estimate a model
on the sampled character 5-gram topology from the previous approach,
using our KL minimization algorithm.
\end{enumerate}

\paragraph{Converting word $n$-gram source to character sequence source model}

Briefly, for every state $q$ in the $n$-gram automaton, the set of words
labeling transitions leaving $q$ are represented as a trie of characters
including a final end-of-word symbol.  Each resulting transition labeled with
the end-of-word symbol represents the last transition for that particular
word spelled out by that sequence of transitions, hence is assigned the same
next state as the original word transition.  If $q$ has a backoff transition
pointing to its backoff state $q'$, then each new internal state in the
character trie backs off to the corresponding state in the
character trie leaving $q'$.  The presence of the corresponding state
in the character trie leaving $q'$ is guaranteed because the $n$-gram
automaton is backoff-complete, as discussed in Section \ref{subsec:phiwfa}.

As stated above, this construction converts from word sequences to
character sequences, but will only accept character sequences
consisting of strings of in-vocabulary words, i.e., this is still
closed vocabulary.  To make it open vocabulary, we
further backoff the character trie leaving the unigram state to a
character $n$-gram model estimated from the symbol table (and additional
ASCII character observations).  This is done using a very similar
construction to that described above.  The resulting model is used as
the source model for our KL minimization algorithm, to estimate a
distribution over the sampled character 5-gram topology.

We encode the test set as a sequence of characters, without using the
symbol table since our models are intended to be open vocabulary.
Following typical practice for open-vocabulary settings, we evaluate with
bits-per-character. The results are presented in Table \ref{tab:open}.
Here we achieve slightly lower bits-per-character than even what we
get straight from the corpus, perhaps due to better regularization of
the word-based model than with either Witten-Bell or Kneser-Ney on the
character n-grams.

\begin{table}[t]
  \centering
  \caption{Comparison of character 5-gram models derived from either
  the original corpus or a word trigram model.  Size of the models is
  presented in terms of the number of character $n$-grams, the numbers
  of states and transitions in the automaton representation, and the file size in
  MB.  The two corpus estimated models have the same topology, hence
  the same size; as do the two word trigram estimated models.}\label{tab:open}
\vskip .15in
\begin{tabular}{lcccccc}
& $n$-grams & states & transitions & & & \\
Source & (x1000) & (x1000) & (x1000) & MB & Estimation &
bits/char\\\hline
Corpus & 336 & 60 & 381 & 6.5 & Kneser-Ney & 2.04\\
 &  &  &  &  & Witten-Bell (WB) & 2.01\\\hline
Word trigram & 277 & 56 & 322 & 5.6 & Sampled (WB) & 2.36\\
 &  &  &  &  & KL min & 1.99
\end{tabular}
\end{table}

\section{Software library}
\label{sec:lib}
All of the algorithms presented here are available
in the {\tt OpenGrm} libraries at {\small \url{http://www.opengrm.org}}
under the topic: {\em SFST Library: operations to normalize, sample,
combine, and approximate stochastic finite-state transducers.}
We illustrate the use of this library by showing how to implement
some of the experiments in the previous section.

\subsection{Example data and models}

Instead of Broadcast News, we will use the text of Oscar Wilde's {\em The Importance of Being
Earnest} for our tutorial example. This is a small tutorial corpus that we
make available at {\small \url{http://sfst.opengrm.org}}.

This corpus of approximately 1688 sentences and 18,000-words
has been upper-cased and had punctuation removed. The
first 850 sentences were used as training data and the remaining 838 sentences
used as test data. From these, we produce two 1000-word vocabulary Katz bigram models,
the $\sim\! 6k$ $n$-gram {\tt earnest\_train.mod} and the $\sim\! 4k$ $n$-gram
{\tt earnest\_test.mod}.
We also used relative-entropy pruning to create the $\sim\! 2k$
$n$-gram {\tt earnest\_train.pru}. The data, the steps to generate these models and how
to compute their perplexity using
{\em OpenGrm NGram} are all fully detailed in the {\tt QuickTour} topic
at {\small \url{http://sfst.opengrm.org}}.

\subsection{Computing the approximation}

The following step shows how to compute the approximation of a $\varphi$-WFA
model onto a $\varphi$-WFA topology. In the example below,
the first argument, {\tt earnest\_train.mod}, is the source model,
and the second argument, {\tt earnest\_train.pru}, provides the topology.
The result is a $\varphi$-WFA whose perplexity can be measured as before.

\begin{center}
{\tt
\begin{tabular}{l}
\$ sfstapprox -phi\_label=0 earnest\_train.mod earnest\_train.pru \textbackslash\\
~~>earnest\_train.approx
\end{tabular}
}
\end{center}

An alternative, equivalent way to perform this approximation is to break it into two steps,
with the counting and normalization (KL divergence minimization) done separately.

\begin{center}
{\tt
\begin{tabular}{l}
\$ sfstcount -phi\_label=0 earnest\_train.mod earnest\_train.pru \textbackslash\\
~~>earnest\_train.approx\_cnts\\
\$ sfstnormalize -method=kl\_min -phi\_label=0 \textbackslash\\
~~earnest\_train.approx\_cnts >earnest\_train.approx
\end{tabular}
}
\end{center}

We can now use these utilities to run some experiments analogous to our larger Broadcast
News experiments by using different target topologies. The results are shown in
Table~\ref{tab:tutres}. We see in the idempotency experiment, the perplexity of the
approximation on the same topology matches the source. In the greedy-pruning experiment,
the approximation onto the greedy-pruned topology yields a better perplexity than the
greedily-pruned model. Finally, the approximation onto the test-set bigram topology gives a
better perplexity than the training-set model since we include all the relevant bigrams.

\begin{table}
\centering
\caption{Perplexities for example experiments.}
\begin{tabular}{l|l|l|l}
\multirow{2}{*}{\bf Experiment } & \multirow{2}{1cm}{\bf Source } & \multirow{2}{*}{\bf Topology }
& \multirow{2}{1cm}{\bf Approx. }\\
& {\bf Model} & & {\bf Perplexity}\\ \hline
Idempotency & earnest\_train.mod & earnest\_train.mod & 73.41 \\
Comparison to greedy pruning & earnest\_train.mod & earnest\_train.pru & 83.12 \\
Approx.~onto best bigram topology  & earnest\_train.mod & earnest\_test.pru & 69.68 \\
\end{tabular}
\label{tab:tutres}
\end{table}

\subsection{Available operations}
Table~\ref{tab:sfstops} lists the available command-line operations in the {\tt OpenGrm SFst} library.
We show command-line utilities here;
there are corresponding C++ library functions that can be called from within a program; see
{\small \url{http://sfst.opengrm.org}}.

\begin{table}
\centering
\caption{Available Operations in the OpenGrm SFst Library}
\vskip .15in
{\tt
\begin{tabular}{@{}l|p{2.6in}}
{\bf Operation } & {\bf Description}\\ \hline
sfstapprox & Approximates a stochastic $\varphi$-WFA  \\
 & ~~wrt a backoff-complete $\varphi$-WFA.\\
sfstcount & Counts from a stochastic $\varphi$-WFA  \\
& ~~wrt a backoff-complete $\varphi$-WFA.\\
sfstintersect & Intersects two $\varphi$-WFAs.\\~\\
sfstnormalize -method=global & Globally normalizes a $\varphi$-WFA.\\~\\
sfstnormalize -method=kl\_min & Normalizes a count $\varphi$-WFA using \\
& ~~KL divergence minimization.\\
sfstnormalize -method=local & Locally normalizes a $\varphi$-WFA.\\~\\
sfstnormalize -method=phi & $\varphi$-normalizes a $\varphi$-WFA.\\~\\
sfstperplexity & Computes perplexity of a \\
& ~~stochastic $\varphi$-WFA.\\
sfstrandgen & Randomly generates paths from a\\
& ~~stochastic $\varphi$-WFA.\\
sfstshortestdistance & Computes the shortest distance \\
& ~~on a $\varphi$-WFA.\\
sfsttrim & Removes useless states and \\
& ~~transitions in a $\varphi$-WFA.
\end{tabular}
}
\label{tab:sfstops}
\end{table}

\section{Summary and discussion}
\label{sec:discuss}
In this paper, we have presented an algorithm for minimizing the
KL-divergence between a probabilistic source model over sequences and
a WFA target model.  Our algorithm is general enough to permit source
models of arbitrary form (e.g., RNNs) and a wide class of target WFA
models, importantly including those with failure transitions, such as $n$-gram models.  We
provide some experimental validation of our algorithms, including
demonstrating that it is well-behaved in common scenarios and that it
yields improved performance over baseline $n$-gram models using the same
WFA topology.  Additionally, we use our methods to provide lower
bounds on how well a given WFA topology can model a given test set.
All of the algorithms reported here are available in the open-source
{\tt OpenGrm} libraries at {\small \url{http://opengrm.org}}.

In addition to the above-mentioned results, we also demonstrated that
optimizing the WFA topology for the given test set yields far better
perplexities than were obtained using WFA topologies derived from
training data alone, suggesting that the problem of deriving an
appropriate WFA topology -- something we do not really touch on in
this paper -- is particularly important.

\bibliographystyle{plainnat}
\bibliography{approx}

\end{document}